\title[Piecewise linear functions representable with neural networks]{Piecewise linear functions representable with infinite width shallow ReLU neural networks}
\author{Sarah McCarty } 
\address{Department of Mathematics, Iowa State University,
Ames, Iowa 50011} 
\email{smmcc@iastate.edu}
\subjclass[2010]{68T07, 42C40, 41A30}
\begin{document}
\begin{abstract}
    This paper analyzes representations of continuous piecewise linear functions with infinite width, finite cost shallow neural networks using the rectified linear unit (ReLU) as an activation function.
    Through its integral representation, a shallow neural network can be identified by the corresponding signed, finite measure on an appropriate parameter space.
    We map these measures on the parameter space to measures on the projective $n$-sphere cross $\R$, allowing points in the parameter space to be bijectively mapped to hyperplanes in the domain of the function.
    We prove a conjecture of Ongie et al.
    that every continuous piecewise linear function expressible with this kind of infinite width neural network is expressible as a finite width shallow ReLU neural network.
\end{abstract}
\maketitle
\section{Introduction}
We consider shallow neural networks which use rectified linear unit (ReLU) as the activation function.
It is well-known ReLU has universal approximation properties on compact domains, and in practice has advantages over sigmoidal activation functions \cites{cybenkoApproximationSuperpositionsSigmoidal1989,oostwalHiddenUnitSpecialization2021}.
Finite width shallow neural networks with $n+1$-dimensional input take the form
\begin{equation}
    f(\vecx{x}) = c_0+\sum_{i=1}^k c_i\sigma\left(\veca{a}_i\cdot \vecx{x} - b_i\right)
    \label{finiteNetwork}
\end{equation}
where $\veca{a}_i\in\mathbb{S}^n$ (the unit sphere in $\R^{n+1}$) and $b_i,c_i\in\R$ for all $i$.
\\

Generalizing to infinite width neural networks transforms the sum to an integral and the weights $c_i$ to a signed measure $\mu$ on $\mathbb{S}^n\times\R$ where
\begin{equation}
    f(\vecx{x}) = \int_{\mathbb{S}^n\times\R} \sigma\left(\veca{a}\cdot \vecx{x} - b\right)\dif\mu(\veca{a},b)+c_0.
    \label{usualGeneralForm}
\end{equation}
Some authors choose instead to have the measure and integral over all of $\R^{n+1}\times\R$.
An important class of functions are those representable with an infinite width neural network with finite representation cost, which corresponds with $|\mu|\left(\mathbb{S}^n\times\R\right)<\infty$ \cite{bartlettValidGeneralizationSize}.
Similar classes of functions are studied in \cite{ePrioriEstimatesPopulation2019} as Barron spaces and in \cite{bachBreakingCurseDimensionality2017} as $\mathcal{F}_1$.
\\

To ensure the integral in \autoref{usualGeneralForm} is well-defined, we can require $\mu$ has a finite first moment where $\int_{\mathbb{S}^n\times\R} |b|\dif|\mu|(\veca{a},b)<\infty$.
Alternatively, Ongie et al.
in \cite{ongieFunctionSpaceView2019} writes the integral in the form
\begin{equation}
    \int_{\mathbb{S}^n\times\R} \sigma(\veca{a}\cdot \vecx{x}-b)-\sigma(-b)\dif\mu(\veca{a},b)+c_0,
    \label{ongieGeneralForm}
\end{equation}
so the integral is well-defined whenever $\mu$ is a finite measure.
Since they differ only by a constant, the class of functions representable with a finite measure in the form of \autoref{usualGeneralForm} is a subclass of the class of functions representable with a finite measure in the form of \autoref{ongieGeneralForm} \cite{ongieFunctionSpaceView2019}.
Therefore, we choose to consider integral representations in the form of \autoref{ongieGeneralForm}.
\\

Since $\sigma(\veca{a}\cdot \vecx{x}-b)$ is a ridge function, integral representations are also naturally studied as the dual ridgelet transform on distributions \cites{sonodaNeuralNetworkUnbounded2017,candesHarmonicAnalysisNeural1999,kostadinovaRidgeletTransformDistributions2014}.
Functions with these representations are then often analyzed with the Radon transform \cites{sonodaNeuralNetworkUnbounded2017, kostadinovaRidgeletTransformDistributions2014,ongieFunctionSpaceView2019}.
Savarese in \cite{savareseHowInfiniteWidth2019} characterized which one-dimensional functions are representable with infinite width, finite cost shallow ReLU neural networks.
\\

Many finitely piecewise linear functions cannot be represented with a finite width ReLU shallow network, including all non-trivial compactly supported piecewise linear functions on $\R^n$, $n\geq 2$ \cite{ongieFunctionSpaceView2019}.
Lower and upper bounds on number of layers needed to represent continuous piecewise linear functions with finite width, deep neural networks have been established \cites{heReLUDeepNeural2020, aroraUnderstandingDeepNeural2018}.
However, the class of infinite width ReLU networks is certainly more expressive than the class of finite width networks in general, such as being able to express some non-piecewise linear functions \cite{savareseHowInfiniteWidth2019}.
It is not obvious if the class of infinite width shallow ReLU neural networks can express a finitely piecewise linear function that the class of finite width shallow ReLU networks cannot.
By decomposing measures, E and Wojtowytsch in \cite{e.RepresentationFormulasPointwise2022} established the set of points of non-differentiability of a function in a Barron space must be a subset of a countable union of affine subspaces.
However, proper subsets are possible.
In \cite{ongieFunctionSpaceView2019}, Ongie et al.
proved many compactly supported piecewise linear functions are not representable with finite cost, infinite width shallow ReLU neural networks.
This led to the following conjecture.
\begin{conjecture}[Ongie et al., \cite{ongieFunctionSpaceView2019}]
    A continuous piecewise linear function $f$ has finite representation cost if and only if it is exactly representable by a finite width shallow neural network.
\end{conjecture}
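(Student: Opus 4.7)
The ``if'' direction is immediate: a finite-width network of the form \autoref{finiteNetwork} is realized via \autoref{ongieGeneralForm} by the purely atomic measure $\mu = \sum_{i=1}^k c_i \delta_{(\veca{a}_i, b_i)}$, which has finite total variation. For the converse, suppose $f$ is continuous piecewise linear with finite representation cost, realized by some signed finite measure $\mu$ on $\mathbb{S}^n \times \R$. Using $\sigma(t) = \tfrac12(|t|+t)$ and splitting the integrand gives
\[
    f(\vecx{x}) = \tfrac12 \int_{\mathbb{S}^n \times \R} \bigl(|\veca \cdot \vecx{x} - b| - |b|\bigr)\,\dif\mu(\veca, b) + L(\vecx{x}) + c_0,
\]
where $L$ is affine in $\vecx{x}$. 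The absolute-value kernel is invariant under the involution $T(\veca, b) = (-\veca, -b)$, so this integral descends to the pushforward $\bar{\mu}$ on the projective quotient $(\mathbb{S}^n \times \R)/T$, whose points correspond bijectively to affine hyperplanes $H \subset \R^{n+1}$.

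The crux is to show that $\bar\mu$ has finite support. Since $f$ is CPL, its non-smooth locus is contained in a finite union of hyperplanes $H_1, \ldots, H_k$. For any direction $\vecx{v} \in \R^{n+1}$ and any $(\veca, b)$, one has $\partial_{\vecx{v}}^2 |\veca \cdot \vecx{x} - b| = 2(\veca \cdot \vecx{v})^2\, \sigma_H$ as distributions on $\R^{n+1}$, where $\sigma_H$ is hypersurface measure on $H = \{\vecx{x} : \veca \cdot \vecx{x} = b\}$. Differentiating under the integral and using $T$-invariance of the integrand yields
\[
    \partial_{\vecx{v}}^2 f = \int_{(\mathbb{S}^n \times \R)/T} (\veca \cdot \vecx{v})^2\, \sigma_H\,\dif\bar{\mu}(H),
\]
while because $f$ is CPL one also has $\partial_{\vecx{v}}^2 f = \sum_{i=1}^k j_i(\vecx{v})\, \sigma_{H_i}$, recording the jump of $\partial_{\vecx{v}} f$ across each kink hyperplane. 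Matching these two distributional expressions and varying $\vecx{v}$ over an open set of directions (so that the weight $(\veca \cdot \vecx{v})^2$ is non-vanishing except on a lower-dimensional locus), I would extract that $\bar\mu$ must be concentrated on $\{H_1, \ldots, H_k\}$. Being a finite signed measure with finite support, $\bar\mu$ is then a finite combination of Dirac masses.

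To translate back to $\mu$, decompose $\mu = \mu^{(e)} + \mu^{(o)}$ into its even and odd parts under $T$. The odd part contributes only an affine function to $f$ via the identity $\sigma(t) - \sigma(-t) = t$, and can be realized by a bounded number of additional ReLU terms. The even part pushes forward to $2\bar\mu$ and, by the previous step, is supported on the finite preimage $\phi^{-1}(\{H_1, \ldots, H_k\})$ of at most $2k$ points in $\mathbb{S}^n \times \R$, so is itself a finite combination of Dirac masses. Assembling these pieces exhibits $f$ as a finite-width shallow ReLU network.

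The main obstacle is making rigorous the distributional matching step: one must rule out that a singular-continuous or absolutely continuous component of $\bar\mu$ conspires, upon integration against the family $\{\sigma_H\}$, to produce a distribution indistinguishable from a finite sum of hypersurface atoms. The expected route is a slicing/disintegration argument — restrict $f$ to a generic affine line $\ell$ transverse to all $H_i$ and to $\mathrm{supp}(\bar\mu)$, on which $\partial_{\vecx{v}}^2 f|_\ell$ is a finite sum of $1$-dimensional deltas; the integral representation realizes it as the pushforward of $(\veca \cdot \vecx{v})^2\,\dif\bar\mu$ under the transversal intersection map, which is a local diffeomorphism away from hyperplanes parallel to $\ell$. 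Savarese's one-dimensional characterization, applied to the restriction $f|_\ell$, forces the line-pushforward to be finitely atomic, and propagating this conclusion back to $\bar\mu$ on the full parameter space — while carefully tracking the direction-dependent weights and absorbing the odd-part contribution into $L$ — is where the principal technical care will be needed.
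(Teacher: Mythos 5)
Your overall architecture --- split $\mu$ into an atomic and a diffuse part, show the diffuse part must vanish, then show the atomic part is finitely supported --- matches the paper's, and your ``if'' direction and the absorption of the odd part into an affine term are fine. The genuine gap is exactly the step you flag: ruling out a diffuse component of $\bar\mu$. Your proposed repair (restrict to a generic line $\ell$, apply the one-dimensional characterization to $f|_\ell$, and pull atomicity back) does not work, because the intersection map $H\mapsto H\cap\ell$ from the $(n+1)$-dimensional space of hyperplanes to the one-dimensional line is not a local diffeomorphism for $n\geq 1$: its fibers (all hyperplanes through a fixed point of $\ell$) are $n$-dimensional. An atomless $\bar\mu$ concentrated near such a fiber pushes forward to something close to a Dirac mass, so finite atomicity of the pushforward on every line says nothing about atomicity of $\bar\mu$ itself. (The uniform measure on the lines through the origin in $\R^2$, which produces the non--piecewise-linear cone $\|\vecx{x}\|$, shows that ``the set of hyperplanes through a given point is $\bar\mu$-null'' is not automatic for atomless measures and must be extracted from the piecewise linearity of $f$ in a genuinely global way.)

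This is precisely where the paper's work lies. It constructs, from an algebraically independent subset of $\R$ of full cardinality, a set $S\subseteq\R^n$ that is uncountable in every open ball yet meets every hyperplane in at most $n$ points; combined with the fact that an uncountable family of sets of positive measure contains $n+1$ members with positive common intersection, this yields a dense set of base points $\vecz{z}$ for which the relevant sets of hyperplanes through $(\vecz{z},y)$ are $|\mu|$-null. Only then do the directional-derivative computations give vanishing integrals over line segments; these are assembled into vanishing integrals over half-spaces and the conclusion is reached via Cram\'er--Wold and Radon--Nikodym. Even this only controls hyperplanes not parallel to the chosen direction $\eLast$, so the paper must induct on the dimension to handle the remainder of the measure. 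None of these ingredients is supplied or replaced by your transversality and slicing sketch, so the central claim --- that $\bar\mu$ is concentrated on $\{H_1,\ldots,H_k\}$ --- remains unproved in your argument.
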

\noindent A finite representation cost corresponds with the existence of a finite measure $\mu$ such that $f$ admits a representation in the form of \autoref{ongieGeneralForm}.
Our main result is to prove the conjecture.
\begin{theorem*}
    \restateBody{representableByDeltaPointMassesN}
\end{theorem*}
\subsection{Notation}
For $m\in\N$, let $[m]\coloneqq \{1,\ldots,m\}$.
\\
The rectified linear unit (ReLU) function from $\R$ to $\R$ is denoted $\sigma(t)$ and defined as $\sigma(t)\coloneqq \max\{0,t\}$.\\
The pushforward measure of measure $\mu$ induced by a mapping $\varphi$ is denoted $\mu\circ \varphi^{-1}$.\\
Regular lower case Latin letter variables generally represent real numbers: $x,y,z,t\in\R$.
Once $n$ is fixed, bold lower case Latin letter variables indicate elements of $\R^{n+1}$: $\veca{a},\vecx{x}\in\R^{n+1}$.
Bold lower case Greek letter variables indicate elements of $\R^n$: $\vecz{z},\vecu{u}\in\R^n$.
\\
Let $\hyp{a}{b}\coloneqq \{ \vecx{x}\in\R^{n+1} \; | \; \veca{a}\cdot \vecx{x} = b \}$.\\
For $m\in\N$, $m-1$ dimensional affine subspaces in $\R^m$ are called hyperplanes.
The $m$-sphere in $\R^{m+1}$ is denoted $\mathbb{S}^m$.
Let $\bm{e}_{m+1}\coloneqq\lDir 0,\ldots, 0, 1\rDir\in\mathbb{S}^m$.
\\
Let $\mathcal{S}^0\coloneqq\{1\}\subseteq\mathbb{S}^0$.
For $m\geq 1$, let $\mathcal{S}^m$ be defined as
\[
    \mathcal{S}^m \coloneqq \{ \vecx{x}\in \mathbb{S}^m \; | \; \bm{e}_{m+1} \cdot \vecx{x} > 0\}\cup \{(x_1,\ldots,x_{m},0) \; | \; (x_1\ldots,x_m)\in\mathcal{S}^{m-1}\}\subseteq\mathbb{S}^m.
\]
Call $\mathcal{S}^m$ a half $m$-dimensional hypersphere.
Let $-\mathcal{S}^m$ denote the pointwise negation of all the points in $\mathcal{S}^m$.
By simple induction, exactly one of $\vecx{x},-\vecx{x}\in\mathcal{S}^m$ for all $\vecx{x}\in\mathbb{S}^m$.
Therefore, $\mathbb{S}^m=\mathcal{S}^m\sqcup\left( -\mathcal{S}^m \right)$.
\\
Let $D_{\vecd{d}^+} f(\vecx{x})$ denote the one-sided directional derivative of $f$ in the positive direction of $\vecd{d}$ for $\vecd{d}\in\mathbb{S}^n$.\\
For any metric space $W$, $\mathcal{B}(W)$ denotes the set of Borel sets and $\mathcal{M}(W)$ denotes the set of Borel, finite, signed measures on $W$.
\section{Preliminaries}
We start by formally defining countably piecewise linear.
\begin{definition}
    A \textbf{convex polyhedron} $C$ is a subset of $\R^{n}$ such that $C=\bigcap_{H\in\mathcal{H}} H$ where $\mathcal{H}$ is a finite set of closed half-spaces.
    A \textbf{supporting hyperplane} $\mathfrak{h}$ of $C$ is the boundary of a half-space in $\mathcal{H}$ such that $C\cap \mathfrak{h}\not=\emptyset$.
    A \textbf{face} of $C$ is a set of the form $C\cap \mathfrak{h}$ where $\mathfrak{h}$ is a supporting hyperplane.
\end{definition}
\begin{remark}
    The union of the faces of a polyhedron $C$ form the boundary of $C$.
\end{remark}
\begin{definition}
    A \textbf{continuous countably (finitely) piecewise linear function} is a continuous function such that there is a countable (finite) collection of convex polyhedra that cover the domain where the function is affine when restricted to each polyhedron.
\end{definition}
\begin{remark}
    The requirement of continuity in the definition does not impose any limitations on the results.
    Every function with a representation of the form $f(\vecx{x})=\int_{\mathbb{S}^n\times\R} \sigma(\veca{a}\cdot\vecx{x}-b)-\sigma(-b)\dif\mu(\veca{a},b)+c_0$ is continuous.
\end{remark}
The basis of the argument is that while hyperplanes can be induced by many different measures, the creases associated with boundaries in a piecewise linear function can only be created by point masses.
To isolate the parts of the measure that induce non-affineness, the measure is associated with a measure on the projective $n$-sphere cross $\R$.
Similar to this procedure, Ongie et al.
in \cite{ongieFunctionSpaceView2019} decomposed measures in $\mathcal{M}(\mathbb{S}^n\times\R)$ into even and odd components, where the odd component induced an affine function and the even component was unique.
We use $\mathcal{S}^n$ as a representation of the projective $n$-sphere.

The following lemmas reduce the problem to measures in $\mathcal{M}(\mathcal{S}^n\times\R)$.
This reduction is possible for any compactly supported measure in $\mathcal{M}(\R^{n+1}\times\R)$ or any measure in $\mathcal{M}(\mathbb{S}^n\times\R)$.
\begin{lemma}
    \label{appliesToCompactThingsToo}
    Suppose $\tau$ is a compactly supported measure in $\mathcal{M}(\R^{n+1}\times\R)$.
    Then, there exists $\mu\in\mathcal{M}(\mathbb{S}^n\times\R)$ such that for all $\vecx{x}\in\R^{n+1}$,
    \[
        \int_{\R^{n+1}\times\R} \sigma\left(\veca{a}\cdot\vecx{x} -b \right)-\sigma(-b)\dif\tau(\veca{a},b) = \int_{\mathbb{S}^n\times\R} \sigma\left(\veca{a}\cdot\vecx{x} -b \right)-\sigma(-b)\dif\mu(\veca{a},b).
    \]
\end{lemma}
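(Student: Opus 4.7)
The approach is to exploit the positive homogeneity of the ReLU, namely $\sigma(t\veca\cdot \vecx - tb) = t\,\sigma(\veca\cdot \vecx - b)$ for $t\geq 0$, so that any $(\veca,b)$ with $\veca\neq 0$ can be renormalized by sending it to $\bigl(\veca/|\veca|,\; b/|\veca|\bigr)\in\mathbb{S}^n\times\R$ at the cost of a scalar factor $|\veca|$ which we absorb into the measure. The factor $|\veca|$ is bounded on $\operatorname{supp}(\tau)$ because $\tau$ has compact support, so the resulting total variation stays finite. Points with $\veca=0$ can simply be discarded: for such $(\veca,b)$, $\sigma(\veca\cdot\vecx-b)-\sigma(-b)=\sigma(-b)-\sigma(-b)=0$, so they contribute nothing to the left-hand integral.

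First I would define the map $\psi:(\R^{n+1}\setminus\{0\})\times\R \to \mathbb{S}^n\times\R$ by $\psi(\veca,b)\coloneqq \bigl(\veca/|\veca|,\; b/|\veca|\bigr)$; this is Borel measurable. Let $\tau'$ denote the restriction of $\tau$ to $(\R^{n+1}\setminus\{0\})\times\R$, and define the weighted finite signed Borel measure $\tilde{\tau}$ on $(\R^{n+1}\setminus\{0\})\times\R$ by
\[
    \dif\tilde{\tau}(\veca,b) \coloneqq |\veca|\,\dif\tau'(\veca,b).
\]
Because $\operatorname{supp}(\tau)$ is compact, $M\coloneqq \sup\{|\veca| : (\veca,b)\in\operatorname{supp}(\tau)\}<\infty$, so $|\tilde{\tau}|\bigl((\R^{n+1}\setminus\{0\})\times\R\bigr) \leq M\cdot |\tau|(\R^{n+1}\times\R)<\infty$. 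Then set $\mu\coloneqq \tilde{\tau}\circ \psi^{-1}\in\mathcal{M}(\mathbb{S}^n\times\R)$.

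Next I would verify the integral identity using change of variables for the pushforward. For any fixed $\vecx$, the integrand $g_{\vecx}(\veca,b)\coloneqq \sigma(\veca\cdot\vecx-b)-\sigma(-b)$ satisfies $g_{\vecx}(\psi(\veca,b))\cdot|\veca| = g_{\vecx}(\veca,b)$ on $(\R^{n+1}\setminus\{0\})\times\R$ by positive homogeneity. Applying the pushforward change of variables with the (Jordan) decomposition of $\tilde\tau$ gives
\[
    \int_{\mathbb{S}^n\times\R} g_{\vecx}(\veca',b')\dif\mu(\veca',b') = \int_{(\R^{n+1}\setminus\{0\})\times\R} g_{\vecx}(\psi(\veca,b))\,|\veca|\dif\tau(\veca,b) = \int_{(\R^{n+1}\setminus\{0\})\times\R} g_{\vecx}(\veca,b)\dif\tau(\veca,b),
\]
and the last integral extends to all of $\R^{n+1}\times\R$ since $g_{\vecx}$ vanishes on $\{\veca=0\}$.

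The main obstacle is the bookkeeping rather than any conceptual difficulty: making sure $\psi$ is properly Borel measurable, that the weighted measure $\tilde\tau$ is a bona fide finite signed Borel measure (handled via compact support giving the bound $M$), and that the pushforward change-of-variables formula is cleanly applied for signed measures through their Jordan decompositions. Beyond that, the identity is a one-line computation using positive $1$-homogeneity of $\sigma$ in the $(\veca,b)$ variable.
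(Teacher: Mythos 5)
Your proposal is correct and follows essentially the same route as the paper: normalize $(\veca{a},b)$ to the sphere via $(\veca{a},b)\mapsto(\veca{a}/\|\veca{a}\|,\,b/\|\veca{a}\|)$, absorb the factor $\|\veca{a}\|$ into a weighted measure (finite by compact support), take the pushforward, and discard $\veca{a}=\bm{0}$ since the integrand vanishes there. The paper's proof is the same computation with the weighted measure denoted $\mu_1$ and the normalization map denoted $g$.
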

\begin{proof}
    Let $g:\left(\R^{n+1}\setminus\{\veca{0}\}\right)\times\R\rightarrow \mathbb{S}^n\times\R$ be defined as $g\left(\veca{a},b\right)=\left( \frac{\veca{a}}{\|\veca{a}\|},\frac{b}{\|\veca{a}\|} \right)$.
    Let $\mu_1$ be the Borel measure defined as $\mu_1(E)=\int_E \|\veca{a}\| \dif\tau\left(\veca{a},b\right)$.
    Since $\tau$ has compact support and is finite, $|\mu_1||(\R^{n+1}\times\R)<\infty$ and $\mu_1\in \mathcal{M}(\R^{n+1}\times\R)$.
    Let $\mu\coloneqq \mu_1\circ g^{-1}$.
    Then,
    \begin{align*}
         & \int_{\R^{n+1}\times\R} \sigma\left(\veca{a}\cdot\vecx{x} -b \right)-\sigma(-b)\dif\tau\left(\veca{a},b\right)                                                                                                                                      \\
         & = \int_{\left(\R^{n+1}\setminus\{\veca{0}\}\right)\times\R} \|\veca{a}\|\left(\sigma\left( \frac{\veca{a}}{\|\veca{a}\|}\cdot x - \frac{b}{\|\veca{a}\|} \right)-\sigma\left( -\frac{b}{\|\veca{a}\|} \right)\right)\dif\tau\left(\veca{a},b\right) \\
         & \quad +\int_{\{\veca{0}\}\times\R}\sigma\left(-b \right)-\sigma(-b)\dif\tau\left(\veca{a},b\right)                                                                                                                                                  \\
         & = \int_{\left(\R^{n+1}\setminus\{\veca{0}\}\right)\times\R} \sigma\left( \frac{\veca{a}}{\|\veca{a}\|}\cdot \vecx{x} - \frac{b}{\|\veca{a}\|} \right)-\sigma\left( -\frac{b}{\|\veca{a}\|} \right)\dif\mu_1\left(\veca{a},b\right)+0                \\
         & = \int_{\mathbb{S}^n\times\R} \sigma\left( \veca{a}\cdot\vecx{x} - b \right)-\sigma(-b)\dif \mu\left(\veca{a},b\right).
    \end{align*}
\end{proof}
\begin{lemma}
    \label{packingDownIntoHalfsphere}
    Suppose $\tau\in\mathcal{M}\left(\mathbb{S}^n\times \R\right)$.
    Then, there exists $\mu\in\mathcal{M}\left(\mathcal{S}^n\times\R\right)$ and $\veca{{a}}_0\in\R^{n+1}$ such that for all $\vecx{x}\in\R^{n+1}$,
    \[
        \int_{\mathbb{S}^n\times \R} \sigma\left(\veca{a}\cdot \vecx{x} -b \right) -\sigma(-b)\dif\tau\left(\veca{a},b\right)= \int_{\mathcal{S}^n\times\R} \sigma\left(\veca{a}\cdot \vecx{x} -b \right)-\sigma(-b)\dif \mu\left(\veca{a},b\right) + \veca{{a}}_0\cdot \vecx{x}.
    \]
\end{lemma}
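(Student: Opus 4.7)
The plan is to exploit the algebraic identity $\sigma(t) - \sigma(-t) = t$, which for our integrand yields
\[
    \bigl[\sigma(\veca{a}\cdot\vecx{x}-b) - \sigma(-b)\bigr] - \bigl[\sigma((-\veca{a})\cdot\vecx{x}-(-b)) - \sigma(-(-b))\bigr] = \veca{a}\cdot\vecx{x}.
\]
Thus replacing a point mass at $(\veca{a},b)$ by a point mass at $(-\veca{a},-b)$ changes the integrand only by the linear term $\veca{a}\cdot\vecx{x}$. Since $\mathbb{S}^n = \mathcal{S}^n \sqcup (-\mathcal{S}^n)$ is a Borel partition, I can fold the piece of $\tau$ living on $(-\mathcal{S}^n)\times\R$ onto $\mathcal{S}^n\times\R$ via this antipodal flip and absorb the discrepancy into a single linear term.

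Concretely, first I would set $\tau_+ = \tau\restriction_{\mathcal{S}^n\times\R}$ and $\tau_- = \tau\restriction_{(-\mathcal{S}^n)\times\R}$, so that $\tau = \tau_+ + \tau_-$. Next, define the Borel map $\psi:(-\mathcal{S}^n)\times\R\to\mathcal{S}^n\times\R$ by $\psi(\veca{a},b)=(-\veca{a},-b)$ and let $\tilde\tau_- \coloneqq \tau_-\circ\psi^{-1}$. Take $\mu \coloneqq \tau_+ + \tilde\tau_-$, which is manifestly in $\mathcal{M}(\mathcal{S}^n\times\R)$ because pushforwards and sums of finite signed Borel measures are finite signed Borel measures (with $|\mu|\leq|\tau_+|+|\tilde\tau_-|\leq|\tau|(\mathbb{S}^n\times\R)<\infty$). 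Then I would apply the change-of-variables formula on the $\tau_-$ part, using the identity above, to rewrite
\[
    \int_{(-\mathcal{S}^n)\times\R}\!\!\bigl[\sigma(\veca{a}\cdot\vecx{x}-b)-\sigma(-b)\bigr]\dif\tau_-(\veca{a},b) = \int_{\mathcal{S}^n\times\R}\!\!\bigl[\sigma(\veca{a}'\cdot\vecx{x}-b')-\sigma(-b')\bigr]\dif\tilde\tau_-(\veca{a}',b') + \vecx{x}\cdot\!\!\int_{(-\mathcal{S}^n)\times\R}\!\!\veca{a}\dif\tau_-(\veca{a},b).
\]
Adding the $\tau_+$ part, which is already supported on $\mathcal{S}^n\times\R$, gives the claimed identity with
\[
    \veca{a}_0 \coloneqq \int_{(-\mathcal{S}^n)\times\R} \veca{a}\dif\tau_-(\veca{a},b).
\]

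There is no serious obstacle: the main things to check are measurability and finiteness. The partition $\mathbb{S}^n=\mathcal{S}^n\sqcup(-\mathcal{S}^n)$ is Borel (by the inductive definition of $\mathcal{S}^n$), so the restrictions $\tau_\pm$ and the pushforward $\tilde\tau_-$ are well-defined finite signed Borel measures. The vector integral defining $\veca{a}_0$ is finite because $\|\veca{a}\|=1$ on $\mathbb{S}^n$ and $\tau_-$ is a finite signed measure, so componentwise $|\veca{a}_0\cdot\bm{e}_i|\leq|\tau|(\mathbb{S}^n\times\R)<\infty$. The only mild care needed is to confirm that Fubini/change-of-variables applies coordinate-wise to extract $\vecx{x}$ from the integral $\int \veca{a}\cdot\vecx{x}\dif\tau_-$, which is immediate since $\vecx{x}$ is constant in the integration variable.
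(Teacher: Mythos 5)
Your proof is correct and is essentially the paper's own argument: both split $\tau$ over the partition $\mathbb{S}^n=\mathcal{S}^n\sqcup(-\mathcal{S}^n)$, push the $(-\mathcal{S}^n)$ piece forward under the antipodal map $(\veca{a},b)\mapsto(-\veca{a},-b)$, and use the identity $\sigma(t)-\sigma(-t)=t$ to absorb the resulting discrepancy into the linear term $\veca{a}_0\cdot\vecx{x}$. Your $\mu$ and $\veca{a}_0$ agree with the paper's choices (the paper's $\veca{a}_0=-\int_{\mathcal{S}^n\times\R}\veca{a}\,\dif(\tau\circ g^{-1})$ equals your $\int_{(-\mathcal{S}^n)\times\R}\veca{a}\,\dif\tau_-$ after the change of variables).
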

\begin{proof}
    Let $g\left(\veca{a},b\right)=\left(-\veca{a},-b \right)$ on $\mathcal{S}^n\times \R$.
    Let $\mu\coloneqq \tau+\tau\circ g^{-1}$ and $\veca{a}_0\coloneqq -\int_{\mathcal{S}^n\times \R}\veca{a}\dif\tau\circ g^{-1}\left(\veca{a},b\right) $.
    Note, $\sigma\left(-x\right)=\sigma\left(x\right)-x$.
    It follows
    \begin{align*}
         & \int_{\mathbb{S}^n\times\R} \sigma\left(\veca{a}\cdot \vecx{x} -b \right)-\sigma(-b)\dif\tau\left(\veca{a},b\right)                                                                                                                              \\
         & = \int_{\mathcal{S}^n\times \R} \sigma\left(\veca{a}\cdot \vecx{x} -b \right)-\sigma(-b)\dif\tau\left(\veca{a},b\right) + \int_{-\mathcal{S}^n\times \R} \sigma\left(\veca{a}\cdot \vecx{x} -b \right)-\sigma(-b)\dif\tau\left(\veca{a},b\right) \\
         & = \int_{\mathcal{S}^n\times \R} \sigma\left(\veca{a}\cdot \vecx{x} -b \right)-\sigma(-b)\dif\tau\left(\veca{a},b\right)                                                                                                                          \\
         & \quad + \int_{\mathcal{S}^n\times \R} \sigma\left(-\veca{a}\cdot \vecx{x} + b\right)-\sigma(b)\dif\tau\circ g^{-1}\left(\veca{a},b\right)                                                                                                        \\
         & = \int_{\mathcal{S}^n\times \R} \sigma\left(\veca{a}\cdot \vecx{x} -b \right)-\sigma(-b)\dif\tau\left(\veca{a},b\right)                                                                                                                          \\
         & \quad + \int_{\mathcal{S}^n\times \R} \sigma\left(\veca{a}\cdot \vecx{x} - b\right)-\left(\veca{a}\cdot \vecx{x} -b \right)-\sigma(-b)-b\dif\tau\circ g^{-1}\left(\veca{a},b\right)                                                              \\
         & = \int_{\mathcal{S}^n\times \R} \sigma\left(\veca{a}\cdot \vecx{x} -b \right)-\sigma(-b)\dif\tau\left(\veca{a},b\right)                                                                                                                          \\
         & \quad + \int_{\mathcal{S}^n\times \R} \sigma\left(\veca{a}\cdot \vecx{x} - b\right)-\sigma(-b)\dif\tau\circ g^{-1}\left(\veca{a},b\right)-\int_{\mathcal{S}^n\times\R} \veca{a}\cdot \vecx{x}\dif\tau\circ g^{-1}\left(\veca{a},b\right)         \\
         & = \int_{\mathcal{S}^n\times \R} \sigma\left(\veca{a}\cdot \vecx{x} -b \right)-\sigma(-b)\dif(\tau+\tau\circ g^{-1})\left(\veca{a},b\right) - \int_{\mathcal{S}^n\times \R}\veca{a}\dif\tau\circ g^{-1}\left(\veca{a},b\right) \cdot \vecx{x}     \\
         & =\int_{\mathcal{S}^n\times\R} \sigma\left(\veca{a}\cdot \vecx{x} -b \right)-\sigma(-b)\dif \mu\left(\veca{a},b\right) + \veca{{a}}_0\cdot \vecx{x}.
    \end{align*}

\end{proof}
The main result will be a consequence of the following theorem, which will be proved by induction.
\restate{mainTheorem}

\noindent We will utilize that first order directional derivatives are constant on affine polyhedra.
To simplify calculations, we will be particularly interested in directional derivatives in the direction $\eLast$.
\begin{lemma}
    \label{directionalDerivativeCalc}
    Suppose $f(\vecx{x})=\int_{\mathcal{S}^n\times\R} \sigma\left(\veca{a}\cdot \vecx{x} -b \right)-\sigma(-b)\dif\mu\left(\veca{a},b\right)$ where $\mu\in\mathcal{M}\left(\mathcal{S}^n\times\R\right)$.
    Then,
    \[
        D_{\eLast^+} f(\vecx{x})=\int_{\left\{\left(\veca{a},b\right)\in\mathcal{S}^n\times\R \; | \; \veca{a}\cdot \vecx{x} \geq b\right\} } \veca{a}\cdot \eLast \dif\mu\left(\veca{a},b\right).
    \]
\end{lemma}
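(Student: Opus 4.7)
The plan is to apply the dominated convergence theorem to bring the one-sided limit defining $D_{\eLast^+} f(\vecx{x})$ inside the integral over $\mathcal{S}^n\times\R$, and then to compute the resulting pointwise limit of the difference quotients.

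Starting from the definition $D_{\eLast^+} f(\vecx{x}) = \lim_{t\to 0^+}(f(\vecx{x}+t\eLast)-f(\vecx{x}))/t$ and noting that the $-\sigma(-b)$ constants cancel inside the difference, one obtains
\[
D_{\eLast^+} f(\vecx{x}) = \lim_{t\to 0^+}\int_{\mathcal{S}^n\times\R} \frac{\sigma(\veca{a}\cdot\vecx{x} + t\,\veca{a}\cdot\eLast - b) - \sigma(\veca{a}\cdot\vecx{x} - b)}{t}\dif\mu(\veca{a},b).
\]
To pull the limit through, I would use that $\sigma$ is $1$-Lipschitz, so the absolute value of the difference quotient is bounded by $|\veca{a}\cdot\eLast|\le 1$ uniformly in $t$, since $\|\veca{a}\|=1$ on $\mathcal{S}^n\subseteq\mathbb{S}^n$. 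Because $|\mu|$ is a finite measure, the constant $1$ is $|\mu|$-integrable, so the dominated convergence theorem applies.

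The remaining step is to evaluate the pointwise limit. When $\veca{a}\cdot\vecx{x} > b$, for all sufficiently small $t > 0$ both arguments of $\sigma$ are positive and the difference quotient equals exactly $\veca{a}\cdot\eLast$. When $\veca{a}\cdot\vecx{x} < b$, both arguments are negative for small $t > 0$ and the quotient equals $0$. On the boundary $\veca{a}\cdot\vecx{x} = b$, the right-hand difference quotient evaluates to $\sigma(\veca{a}\cdot\eLast)$, which coincides with $\veca{a}\cdot\eLast$ when $\veca{a}\cdot\eLast \geq 0$ and vanishes otherwise. Assembling these cases gives the integrand $\veca{a}\cdot\eLast$ supported on the closed half-space $\{\veca{a}\cdot\vecx{x}\geq b\}$, yielding the stated identity.

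The main point of care will be the boundary case $\veca{a}\cdot\vecx{x} = b$: the pointwise computation produces $\max(\veca{a}\cdot\eLast,0)$ rather than $\veca{a}\cdot\eLast$ there, so writing the answer as an integral over the closed set $\{\veca{a}\cdot\vecx{x}\geq b\}$ is an accounting convention that must be reconciled with how the lemma will later be invoked (typically at points where the relevant boundary hyperplanes carry no $\mu$-mass, so that this distinction is immaterial). Modulo this bookkeeping, the two ingredients above give the lemma.
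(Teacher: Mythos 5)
Your proposal follows the same route as the paper: rewrite the one-sided difference quotient as an integral, dominate it by $1$ using that $\sigma$ is $1$-Lipschitz and $\|\veca{a}\|=\|\eLast\|=1$, apply dominated convergence, and compute the pointwise limit case by case. That is exactly the paper's argument.

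The one loose end you leave --- the boundary case $\veca{a}\cdot\vecx{x}=b$, which you describe as an ``accounting convention'' to be reconciled later --- is not actually a convention, and you should close it rather than defer it. By the construction of the half-sphere $\mathcal{S}^n$, every $\veca{a}\in\mathcal{S}^n$ satisfies $\veca{a}\cdot\eLast\geq 0$ (the last coordinate is either strictly positive or zero). Hence on the boundary the right-hand difference quotient $\sigma(\veca{a}\cdot\eLast)=\max(\veca{a}\cdot\eLast,0)$ equals $\veca{a}\cdot\eLast$ exactly, and more generally $\veca{a}\cdot\vecx{x}-b\geq 0$ forces $\veca{a}\cdot\vecx{x}-b+t\,\veca{a}\cdot\eLast\geq 0$ for all $t\geq 0$, so the quotient is identically $\veca{a}\cdot\eLast$ there, not just in the limit. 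This is precisely why the lemma is stated over $\mathcal{S}^n$ rather than $\mathbb{S}^n$, and the exactness matters: the lemma is later invoked in situations where $\mu$ does charge the boundary hyperplane (for instance in the point-mass argument of \autoref{PointMassesAreBoundaries}, where the atom sits exactly on $\overbar{L}_{\vecz{z}_0}(\phi(\vecz{z}_0))$), so you cannot dismiss the distinction as immaterial on null sets. With that one observation added, your proof is complete.
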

\begin{proof}
    First,
    \begin{align*}
         & \lim_{h\rightarrow 0^+}\frac{f(\vecx{x}+h\eLast)-f(\vecx{x})}{h}                                                                                                                                              \\
         & =\lim_{h\rightarrow 0^+}\int_{\mathcal{S}^n\times\R}\frac{\sigma\left(\veca{a}\cdot \vecx{x} -b+\veca{a}\cdot(h\eLast)\right)-\sigma\left(\veca{a}\cdot \vecx{x}-b \right)}{h}\dif\mu\left(\veca{a},b\right).
    \end{align*}
    Whenever $\veca{a}\cdot \vecx{x} < b$, for sufficiently small $h$,
    \[
        \sigma\left(\veca{a}\cdot \vecx{x} -b+\veca{a}\cdot (h\eLast)\right)=\sigma\left(\veca{a}\cdot \vecx{x}-b \right)=0.
    \]
    Thus, when $\veca{a}\cdot \vecx{x} < b$,
    \[
        \lim_{h\rightarrow 0^+}\frac{\sigma\left(\veca{a}\cdot \vecx{x} -b+\veca{a}\cdot (h\eLast)\right)-\sigma\left(\veca{a}\cdot \vecx{x}-b \right)}{h}=0.
    \]
    By definition of $\mathcal{S}^n$, $\veca{a}\cdot(h\eLast)\geq 0$ when $h\geq 0$ for all $\veca{a}\in\mathcal{S}^n$.
    Hence, if $\veca{a}\cdot \vecx{x} -b\geq 0$, then $\veca{a}\cdot \vecx{x} -b+\veca{a}\cdot(h\eLast)\geq 0$ for all $\veca{a}\in\mathcal{S}^n$ and $h\geq 0$.
    It follows whenever $\veca{a}\cdot \vecx{x} \geq b$,
    \begin{align*}
         & \lim_{h\rightarrow 0^+}\frac{\sigma\left(\veca{a}\cdot\vecx{x} -b+\veca{a}\cdot (h\eLast)\right)-\sigma\left(\veca{a}\cdot\vecx{x}-b \right)}{h}                                                        \\
         & =\lim_{h\rightarrow 0^+}\frac{\veca{a}\cdot\vecx{x} -b+\veca{a}\cdot (h\eLast)-\left(\veca{a}\cdot\vecx{x}-b \right)}{h}=\lim_{h\rightarrow 0^+}\frac{\veca{a}\cdot (h\eLast)}{h}=\veca{a}\cdot \eLast.
    \end{align*}
    Further, as $\| \veca{a}\|=\|\eLast\|=1$, for all $\veca{a},\vecx{x},b$ and all $h\geq 0$,
    \begin{equation}
        \left\lvert \frac{\sigma\left(\veca{a}\cdot \vecx{x} -b+\veca{a}\cdot(h\eLast)\right)-\sigma\left(\veca{a}\cdot \vecx{x}-b \right)}{h}\right\rvert \leq \frac{|\veca{a}\cdot(h\eLast)|}{h}\leq 1.
        \label{dominatedEquation}
    \end{equation}
    Since $|\mu|(\mathcal{S}^n\times\R)<\infty$, by \autoref{dominatedEquation}, Dominated Convergence Theorem applies.
    Therefore,
    \begin{align*}
         & \lim_{h\rightarrow 0^+}\frac{f(\vecx{x}+(h\eLast))-f(\vecx{x})}{h}                                                                                                                                          \\
         & =\int_{\mathcal{S}^n\times\R}\lim_{h\rightarrow 0^+}\frac{\sigma\left(\veca{a}\cdot\vecx{x} -b+\veca{a}\cdot (h\eLast)\right)-\sigma\left(\veca{a}\cdot\vecx{x}-b \right)}{h}\dif\mu\left(\veca{a},b\right) \\
         & = \int_{\left\{\left(\veca{a},b\right)\in\mathcal{S}^n\times\R\; | \; \veca{a}\cdot\vecx{x} \geq b \right\}} \lim_{h\rightarrow 0^+}\frac{\veca{a}\cdot (h\eLast)}{h}\dif \mu\left(\veca{a},b\right)        \\
         & = \int_{\left\{\left(\veca{a},b\right)\in\mathcal{S}^n\times\R\; | \; \veca{a}\cdot\vecx{x} \geq b \right\}} \veca{a}\cdot \eLast \dif \mu\left(\veca{a},b\right).
        \qedhere
    \end{align*}
\end{proof}

\noindent We prove the one-dimensional case of \autoref{mainTheorem} to serve as a base case for induction.
\begin{lemma}
    \label{onePointDoesntDoIt}
    Suppose $\mu\in\mathcal{M}(\mathcal{S}^0\times\R)$ is such that
    \begin{enumerate}
        \item $\mu$ is atomless
        \item $f(x)=\int_{\mathcal{S}^0\times\R} \sigma\left(ax -b \right)-\sigma(-b)\dif\mu\left(a,b\right)$ is a continuous countably piecewise linear function.
    \end{enumerate}
    Then, $\mu$ is the zero measure.
\end{lemma}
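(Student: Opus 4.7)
The plan is to reduce the statement to showing that the cumulative distribution function $F(x) \coloneqq \mu((-\infty, x])$, viewing $\mu$ as a measure on $\mathbb{R}$ via the identification $\mathcal{S}^0 = \{1\}$, is identically zero; this forces $\mu$ to be the zero measure by the standard fact that a finite signed Borel measure on $\mathbb{R}$ is determined by its values on left half-lines.

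First I would apply \autoref{directionalDerivativeCalc} with $n = 0$, so $\eLast = 1$ and $a = 1$ throughout $\mathcal{S}^0$. The lemma gives $D_{1^+} f(x) = \mu((-\infty, x]) = F(x)$ for every $x \in \mathbb{R}$. Since $\mu$ is finite and atomless, $F$ is continuous (the jump of $F$ at $x$ equals $\mu(\{x\}) = 0$).

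Next I would exploit that $f$ is countably piecewise linear: convex polyhedra in $\mathbb{R}$ are closed intervals, so there exists a countable cover $\{I_j\}_{j \in \mathbb{N}}$ of $\mathbb{R}$ by closed intervals on each of which $f$ is affine. Each $I_j$ contributes at most two boundary points, so $E \coloneqq \bigcup_j \partial I_j$ is countable, and at every $x \in \mathbb{R} \setminus E$ the point $x$ lies in the interior of some $I_j$ on which $f$ is affine, so $D_{1^+} f = F$ is locally constant in a neighborhood of $x$.

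Finally, I would argue $F$ is constant. The open set $\mathbb{R} \setminus E$ has at most countably many connected components; on each component $F$ is locally constant and hence constant by connectedness, so $F(\mathbb{R}) = F(E) \cup \{\text{one value per component}\}$ is at most countable. A continuous function $\mathbb{R} \to \mathbb{R}$ whose image is countable must be constant, since the image is a connected subset of $\mathbb{R}$ and only singletons are both connected and countable. Then $F(x) \to 0$ as $x \to -\infty$ by continuity of the finite measure $\mu$ from below, so $F \equiv 0$ and $\mu = 0$. I expect the step most worth care is the passage from ``$F$ locally constant off a countable set'' to ``$F$ globally constant''; the Cantor function shows that this implication fails when $E$ is merely nowhere dense and uncountable, so the countability of the junction set $E$ supplied by the countable piecewise-linear hypothesis is precisely what makes the argument work.
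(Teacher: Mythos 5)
Your proof is correct, and its engine is the same as the paper's: both apply \autoref{directionalDerivativeCalc} with $n=0$ to identify $D_{1^+}f(x)$ with $F(x)=\mu(\{1\}\times(-\infty,x])$, and both use affineness of $f$ on the pieces to conclude that this quantity does not vary inside an affine interval. Where you genuinely diverge is the globalization step. The paper stays local: it shows $\mu_\R([x_1,x_2])=0$ for every closed interval contained in an open affine piece, and then invokes a cited result that an atomless finite measure is determined by its values on such intervals (the complement of the union of open affine pieces being countable, hence $|\mu|$-null). You instead argue globally about the cumulative distribution function: $F$ is continuous because $\mu$ is atomless and finite, locally constant off the countable junction set $E$, hence has countable image; a continuous function on $\R$ with countable image is constant, and the limit $F(x)\to 0$ as $x\to-\infty$ forces $F\equiv 0$, whence $\mu=0$ by uniqueness of measures with a given distribution function. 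Your route is self-contained and avoids the external citation, at the cost of needing the (easy but necessary) continuity of $F$ and the limit at $-\infty$; you correctly flag that the countability of $E$ is exactly what blocks a Cantor-function counterexample. One small point worth making explicit if you write this up: a degenerate (single-point) polyhedron $I_j$ has $\partial I_j=I_j$, so any $x\notin E$ that lies in some covering interval necessarily lies in the interior of a nondegenerate one; this is implicit in your argument and is fine, but deserves a sentence.
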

\begin{proof}
    Since $|\mathcal{S}^0|=|\{1\}|=1$, $\mu$ is uniquely determined by its marginal measure on $\R$, $\mu_\R$.
    As $f$ is countably piecewise linear, there are countably many intervals $\{[q_i,r_i]\}_{i\in\N}$ such that $f$ is affine when restricted to each interval and $\R\setminus\left(\bigcup_{i\in\N} (q_i,r_i)\right)$ is countable.
    Further, $\mu$ is atomless, so $\mu_{\R}$ is determined by its values on closed intervals that are subsets of intervals in $\{(q_i,r_i)\}_{i\in\N}$ \cite{jacksonSclassGeneratedOpen1999}.
    Thus, it suffices to show $\mu_{\R}([x_1,x_2])=0$ whenever $f$ is affine on $(q,r)$ and $[x_1,x_2]\subseteq (q,r)$.
    \\
    Suppose $f$ is affine on $(q,r)$ and $[x_1,x_2]\subseteq (q,r)$.
    Note,
    \begin{align*}
        f'(x_2) & =\int_{\left\{\left(a,b\right)\in\mathcal{S}^0\times\R\; | \; x_2 \geq b\right\}} a\cdot 1\dif\mu\left(a,b\right) = \int_{\left\{\left(a,b\right)\in\mathcal{S}^0\times\R\; |\; x_2 \geq b\right\}} 1\cdot 1\dif \mu\left(a,b\right) \\
                & = \int_{\left\{\left(a,b\right)\in\mathcal{S}^0\times\R\; |\; x_2 \geq b\right\}} 1 \dif \mu\left(a,b\right) = \mu\left( {\left\{\left(a,b\right)\in\mathcal{S}^0\times\R\; | \;x_2 \geq b\right\}} \right)
    \end{align*}
    Similarly,
    \[
        f'(x_1) = \mu\left( \left\{\left(a,b\right)\in\mathcal{S}^0\times\R\; |\; x_1 \geq b\right\}\right).
    \]
    Therefore, as $f$ is affine in between $x_1$ and $x_2$,
    \[
        0=f'(x_2)-f'(x_1)=\mu\left(\{1\}\times \big(x_1,x_2\big]\right)=\mu\left(\left\{1\right\}\times \big[x_1,x_2\big]\right)=\mu_\R([x_1,x_2]).
    \]

\end{proof}
\section{Constructing a Dense Set of Directions}
Any set of countably many points has zero weight with respect to an atomless measure.
In the one-dimensional case, this allows us to disregard points in $\mathcal{S}^0\times\R$ associated with boundaries when determining $\mu$ is the zero measure.
However, in higher dimensions, there are more than countably many points associated with boundaries.
Nonetheless, a carefully picked subset of points associated with boundaries will have zero weight with respect to $\mu$ and will be large enough to ultimately conclude $\mu$ is the zero measure.
The first step to constructing this set is finding a large set of non-co-hyperplanar points.
\begin{proposition}
    \label{superUsefulSet}
    For every $n\in\N$, there exists a set $S\subseteq\R^n$ such that
    \begin{enumerate}
        \item For every open ball $B\subseteq \R^n$, $S\cap B$ is uncountable
        \item For every hyperplane $P\subseteq \R^n$, $|S\cap P|\leq n$.
    \end{enumerate}
\end{proposition}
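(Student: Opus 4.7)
The plan is to build $S$ by transfinite recursion of length $\omega_1$, the first uncountable ordinal, adding one point at a time while maintaining the invariant that every subset of size at most $n+1$ of the set built so far is affinely independent. The key observation is that a countable union of proper affine subspaces of $\R^n$ has empty interior, so at each stage I will have room to pick a new point inside any prescribed open ball while preserving the invariant.

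For the setup, I will fix a countable basis $\{B_k\}_{k\in\N}$ of open balls (say with rational centers and rational radii) and partition $\omega_1 = \bigsqcup_{k\in\N} A_k$ with each $|A_k|=\aleph_1$, which is possible since $|\omega_1 \times \N| = \aleph_1$; let $k(\alpha)$ denote the unique index with $\alpha \in A_{k(\alpha)}$. At stage $\alpha < \omega_1$, assuming $x_\beta$ has been chosen for all $\beta<\alpha$, set $S_\alpha := \{x_\beta : \beta < \alpha\}$, which is countable, and let
\[
    E_\alpha := \bigcup \left\{ \operatorname{aff}(T) : T\subseteq S_\alpha,\ 1 \leq |T| \leq n \right\},
\]
a countable union of affine subspaces of dimension at most $n-1$. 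Each such subspace is Lebesgue-null (and nowhere dense), so $B_{k(\alpha)} \setminus E_\alpha$ is non-empty, and I pick $x_\alpha$ from this difference.

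To verify the invariant inductively, at a successor step any new subset of $S_{\alpha+1}$ of size at most $n+1$ must contain $x_\alpha$, so it has the form $T \cup \{x_\alpha\}$ with $T \subseteq S_\alpha$ and $|T| \leq n$. By the inductive hypothesis $T$ is affinely independent, and by construction $x_\alpha \notin \operatorname{aff}(T)$, so $T \cup \{x_\alpha\}$ is affinely independent as well. At limit ordinals the invariant passes through, since any such subset is finite and thus lies in some $S_\beta$ with $\beta < \lambda$. Setting $S := \{x_\alpha : \alpha < \omega_1\}$, any hyperplane $P$ has affine dimension $n - 1$ and therefore contains no $n+1$ affinely independent points, so $|S \cap P| \leq n$; and for any open ball $B$ one has $B \supseteq B_k$ for some $k$, whence $S \cap B \supseteq \{x_\alpha : \alpha \in A_k\}$ is uncountable since $|A_k| = \aleph_1$ and the $x_\alpha$ are pairwise distinct by the $|T|=1$ case of the construction.

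The main obstacle is bookkeeping: I must schedule the recursion so that each ball $B_k$ is visited $\aleph_1$ times, and I must impose the invariant on \emph{all} subsets of size up to $n+1$, not only on those of size exactly $n+1$, because a degenerate $n$-subset of $S_\alpha$ would fail to cut out a unique hyperplane for me to avoid. Once this is in place, the essential content of the argument reduces to the elementary fact that $\R^n$ cannot be written as a countable union of proper affine subspaces.
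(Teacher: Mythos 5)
Your proof is correct, but it takes a genuinely different route from the paper. The paper starts from von Neumann's explicit uncountable set of reals algebraically independent over $\Q$, manufactures points of $S$ whose coordinates are (rational multiples of) distinct members of that set, and rules out $n+1$ co-hyperplanar points by observing that the determinant of the associated difference matrix is a nontrivial polynomial with rational coefficients in those coordinates, which cannot vanish by algebraic independence. You instead run a transfinite recursion of length $\omega_1$, at each stage avoiding the countable union of affine hulls of the $\le n$-element subsets already chosen; the engine is the elementary fact that a countable union of proper affine subspaces cannot cover an open ball (Baire category or Lebesgue measure), and the invariant that every $\le(n+1)$-element subset is affinely independent does pass through limit stages because finite subsets of $S_\lambda$ appear at some earlier stage. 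Your bookkeeping (partitioning $\omega_1$ into $\aleph_1$ blocks, one cofinal family per basic ball) correctly guarantees uncountable intersection with every ball, and the $|T|=1$ case of the avoidance gives distinctness of the $x_\alpha$. The trade-offs are minor: your argument is more self-contained and needs no input beyond the axiom of choice and a category/measure argument, whereas the paper's construction leans on a quotable classical result and yields a set of full cardinality $|\R|$ in each ball rather than $\aleph_1$ --- a distinction irrelevant here, since only uncountability is used downstream. Both constructions are used later only through Corollary \ref{projectingUsefulSets} and the ``at most $n$ points per hyperplane'' property, which your set satisfies.
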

\begin{proof}
    By \cite{neumannSystemAlgebraischUnabhaengiger1928}, there is a set $I\subseteq\R$ algebraically independent over $\Q$ such that $|I|=|\R|$.
    \\
    There is a bijective function $\phi: [n]\times\N\times \R \rightarrow I$.\\
    Let $\{B_m\}_{m\in\N}$ be an enumeration of open balls in $\R^n$ centered at rational coordinates with rational radius.\\
    Note, $0\not\in I$.
    For every $m\in\N,r\in\R$, there exist $q_{1,m,r},\ldots,q_{n,m,r}\in\Q\setminus\{0\}$ such that
    \[
        \left(q_{1,m,r}\phi(1,m,r),\ldots,q_{n,m,r}\phi(n,m,r)\right)\in B_m.
    \]
    The set $\{q_{\ell,m,r}\phi(\ell,m,r) \; | \;\ell\in[n], m\in\N,r\in\R\}$ is also algebraically independent and each element has a unique representation of the form $q_{\ell,m,r}\phi(\ell,m,r)$.
    Define
    \[
        S\coloneqq\{(q_{1,m,r}\phi(1,m,r),\ldots,q_{n,m,r}\phi(n,m,r))\; | \; m\in\N,r\in\R\}.
    \]
    Consider an open ball $B\subseteq \R^n$.
    Since $\Q$ is dense, there is $m_0$ such that $B_{m_0}\subseteq B$.
    Further, $\{(q_{1,m_0,r}\phi(1,m_0,r),\ldots,q_{n,m_0,r}\phi(n,m_0,r))\; | \; r\in\R\}\subseteq B_{m_0}\subseteq B$.
    It follows $S\cap B$ is uncountable.
    \\
    By way of contradiction, suppose there exist distinct $\left(z_0^1,\ldots,z_0^n\right),\ldots,\left(z_{n}^1,\ldots,z_{n}^n\right)\in S\cap P$ for some hyperplane $P$.
    It follows any $n$ vectors between these points are linearly dependent, so
    \begin{equation}
        \det
        \begin{bmatrix}
            z_0^1-z_1^1 & \ldots & z_0^n-z_1^n \\
            \vdots      & \ddots & \vdots      \\
            z_0^1-z_n^1 & \ldots & z_0^n-z_n^n
        \end{bmatrix}
        =0.
        \label{keyDeterminant}
    \end{equation}
    The determinant is a polynomial over $\Q$ in terms of $z_i^j$.
    Since a unique $z_i^j$, $i\geq 1$, is an addend in each entry, the determinant cannot be the trivial polynomial.
    This contradicts the $z_i^j$ being algebraically independent.
    \\
    It follows for all hyperplanes $P$, $|S\cap P|\leq n$.
\end{proof}
\begin{corollary}
    Suppose $S\subseteq \R^{n}$ is as in \autoref{superUsefulSet}.
    Let $\phi:S\rightarrow \R$ and $S'\coloneqq \{(\vecz{z},\phi(\vecz{z}))\; | \; \vecz{z}\in S\}\subseteq\R^{n+1}$.
    For every $n-1$ dimensional affine subspace $P\subseteq \R^{n+1}$, $|S'\cap P|\leq n$.
    \label{projectingUsefulSets}
\end{corollary}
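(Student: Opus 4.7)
The plan is to reduce the $(n-1)$-dimensional affine subspace $P\subseteq\R^{n+1}$ to a hyperplane in $\R^n$ by projection, and then invoke \autoref{superUsefulSet}.

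First, let $\pi:\R^{n+1}\to\R^n$ denote the projection onto the first $n$ coordinates, $\pi(z_1,\ldots,z_n,z_{n+1})=(z_1,\ldots,z_n)$. Since every element of $S'$ is of the form $(\vecz{z},\phi(\vecz{z}))$ with $\vecz{z}\in S$, the map $\pi$ restricts to a bijection between $S'$ and $S$. In particular, $\pi$ is injective on $S'$, so
\[
    |S'\cap P| \;=\; |\pi(S'\cap P)| \;\leq\; |S\cap \pi(P)|.
\]

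Next, I would observe that the image $\pi(P)$ is an affine subspace of $\R^n$ whose dimension is at most $\dim P = n-1$. Hence $\pi(P)$ is contained in some $(n-1)$-dimensional affine subspace $P'\subseteq\R^n$, i.e.\ a hyperplane of $\R^n$ (if $\pi(P)$ is already a hyperplane we just take $P'=\pi(P)$; otherwise it has smaller dimension and extends to one). Therefore
\[
    |S\cap \pi(P)| \;\leq\; |S\cap P'|.
\]

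Applying property (2) of \autoref{superUsefulSet} to the hyperplane $P'\subseteq\R^n$ gives $|S\cap P'|\leq n$, and chaining the inequalities yields $|S'\cap P|\leq n$. The only subtlety worth pausing on is confirming that $\pi(P)$ really does fit inside a hyperplane of $\R^n$, but this is immediate from the dimension bound above: an affine subspace of $\R^n$ of dimension strictly less than $n$ is always contained in at least one hyperplane.
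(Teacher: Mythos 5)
Your proof is correct and follows essentially the same route as the paper: both reduce the $(n-1)$-dimensional affine subspace of $\R^{n+1}$ to (something contained in) a hyperplane of $\R^n$ by discarding the last coordinate, using that $S'$ is a graph over $S$ so this projection is injective on $S'$. The only difference is cosmetic: you invoke property (2) of \autoref{superUsefulSet} as a black box, whereas the paper re-runs the rank/determinant argument on the matrix of first coordinates and appeals to algebraic independence; your version is slightly cleaner for it.
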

\begin{proof}
    Let $\phi:S\rightarrow \R$.
    By way of contradiction, let $P$ be a $n-1$ dimensional affine subspace and suppose distinct points $(z_0^1,\ldots,z_{0}^{n+1}),\ldots,(z_{n}^1,\ldots,z_{n}^{n+1})\in S'\cap P.
    $
    Then,
    \[
        \rank
        \begin{bmatrix}
            z_0^1-z_1^1 & \ldots & z_0^{n}-z_1^{n} \\
            \vdots      & \ddots & \vdots          \\
            z_0^1-z_n^1 & \ldots & z_0^{n}-z_n^{n}
        \end{bmatrix}
        \leq \rank
        \begin{bmatrix}
            z_0^1-z_1^1 & \ldots & z_0^{n+1}-z_1^{n+1} \\
            \vdots      & \ddots & \vdots              \\
            z_0^1-z_n^1 & \ldots & z_0^{n+1}-z_n^{n+1}
        \end{bmatrix}
        \leq n-1.
    \]
    Therefore, as in \autoref{keyDeterminant} of \autoref{superUsefulSet},
    \[
        \det
        \begin{bmatrix}
            z_0^1-z_1^1 & \ldots & z_0^{n}-z_1^{n} \\
            \vdots      & \ddots & \vdots          \\
            z_0^1-z_n^1 & \ldots & z_0^{n}-z_n^{n}
        \end{bmatrix}
        =0,
    \]
    a contradiction.
\end{proof}
\section{Proof of Main Theorem}
\begin{lemma}
    Let $W$ be a metric space and $\mu\in\mathcal{M}(W)$.
    Consider a collection of sets $\mathcal{P}\subseteq\mathcal{B}(W)$ such that there exists a $c\in \N$ where $|\mu|\left( \bigcap_{i\in [c]} P_i \right)=0$ for all distinct $P_1,\ldots,P_c\in \mathcal{P}$.
    Then, there are only countably many $P\in \mathcal{P}$ such that $|\mu|\left(P\right)>0$.
    \label{LimitTheIntersectionLimitTheWeighty}
\end{lemma}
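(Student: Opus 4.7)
The plan is to proceed by induction on $c$. The base case $c = 1$ is immediate, since the hypothesis directly gives $|\mu|(P) = 0$ for every $P \in \mathcal{P}$.

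For the inductive step, I first prove a sub-claim: for any fixed $P_0 \in \mathcal{P}$, only countably many $P \in \mathcal{P}$ satisfy $|\mu|(P \cap P_0) > 0$. To establish this I consider the restricted finite measure $\nu(E) := |\mu|(E \cap P_0)$ on $W$ together with the collection $\mathcal{Q}_{P_0}$ of distinct Borel sets of the form $P \cap P_0$ with $P \in \mathcal{P} \setminus \{P_0\}$. Given any $c-1$ distinct elements $Q_1, \ldots, Q_{c-1} \in \mathcal{Q}_{P_0}$, I pick distinct representatives $P_1, \ldots, P_{c-1}$ with $Q_i = P_i \cap P_0$, so that $P_0, P_1, \ldots, P_{c-1}$ are $c$ distinct members of $\mathcal{P}$. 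The original hypothesis forces $|\mu|(P_0 \cap P_1 \cap \cdots \cap P_{c-1}) = 0$, which equals $\nu(\bigcap_i Q_i)$ since $\bigcap_i Q_i \subseteq P_0$. Thus $\mathcal{Q}_{P_0}$ has the $(c-1)$-intersection property with respect to $\nu$, and the inductive hypothesis yields that only countably many $Q \in \mathcal{Q}_{P_0}$ carry positive $\nu$-mass. The subtle point is that many distinct $P$'s may collapse to the same intersection with $P_0$; however, if some single $Q$ with $\nu(Q) > 0$ were the image of uncountably many $P$, then $P_0$ together with any $c-1$ of these $P$'s would be $c$ distinct sets whose intersection is exactly $Q$ (because $P_i \cap P_0 = Q$ forces $Q \subseteq P_i$ and hence $P_0 \cap P_1 \cap \cdots \cap P_{c-1} = Q$), once more contradicting the hypothesis.

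To finish the inductive step, suppose for contradiction that $\{P \in \mathcal{P} : |\mu|(P) > 0\}$ is uncountable. Then $\mathcal{P}_N := \{P \in \mathcal{P} : |\mu|(P) > 1/N\}$ is uncountable for some $N \in \mathbb{N}$. I then construct a sequence $P_1, P_2, \ldots \in \mathcal{P}_N$ greedily: having chosen $P_1, \ldots, P_k$, the sub-claim applied to each $P_j$ excludes only countably many candidates, so a suitable $P_{k+1} \in \mathcal{P}_N$ with $|\mu|(P_{k+1} \cap P_j) = 0$ for all $j \leq k$ survives in the uncountable pool. The resulting sequence is pairwise $|\mu|$-null on intersections, hence $|\mu|\bigl(\bigcup_k P_k\bigr) = \sum_k |\mu|(P_k) \geq \sum_k 1/N = \infty$, contradicting finiteness of $|\mu|$.

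The main obstacle is executing the pigeonhole step inside the sub-claim: the inductive hypothesis applied to $\mathcal{Q}_{P_0}$ only controls \emph{distinct} intersections, so one must recognize the identity $P_0 \cap P_1 \cap \cdots \cap P_{c-1} = Q$ whenever every $P_i \cap P_0 = Q$, and use it to translate the statement about $\mathcal{Q}_{P_0}$ back to a statement about the individual elements of $\mathcal{P}$.
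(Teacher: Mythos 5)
Your proof is correct, and it takes a genuinely different route from the paper. The paper disposes of this lemma in one line by citing Halmos's theorem that every uncountable family of sets of positive measure admits an infinite subfamily with positive intersection, and then taking the contrapositive: if uncountably many $P\in\mathcal{P}$ had $|\mu|(P)>0$, some $c$ of them would have an intersection of positive $|\mu|$-measure, contradicting the hypothesis. You instead give a self-contained induction on $c$: the restriction $\nu(E)=|\mu|(E\cap P_0)$ together with the collection of distinct traces $P\cap P_0$ correctly inherits the $(c-1)$-fold intersection property, your pigeonhole step resolving the collapse of many $P$'s onto one trace $Q$ is sound (since $P_i\cap P_0=Q$ for all $i$ forces $P_0\cap P_1\cap\cdots\cap P_{c-1}=Q$, whence $|\mu|(Q)=0$), and the final greedy extraction of a pairwise-$|\mu|$-null sequence in $\mathcal{P}_N$ yields the standard contradiction with finiteness of $|\mu|$. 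The trade-off is the usual one: the paper's argument is shorter but leans on an external result that is strictly stronger than needed (an \emph{infinite} subfamily with positive intersection, versus the $c$-element subfamily your argument produces), while your proof is longer but elementary, self-contained, and makes transparent exactly where the finiteness of $|\mu|$ and the cardinality $c$ enter.
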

\begin{proof}
    Every uncountable family of sets of positive measure has an infinite subfamily with positive intersection \cite{halmosLargeIntersectionsLarge1992}.
    The lemma follows from the contrapositive.
\end{proof}
\begin{definition}
    Suppose $\vecz{z}_0\in\R^n,y_1,y_2\in\R\cup\{\pm\infty\}$ with $y_1\leq y_2$.
    Define
    \[
        \overbar{L}_{\vecz{z}_0}(y_1)\coloneqq\left\{ \left(\vecu{u},v\right)\in\R^n\times\R \; | \; v-\vecu{u}\cdot \vecz{z}_0 = y_1\right\}
    \]
    and
    \[
        L_{\vecz{z}_0}(y_1,y_2)\coloneqq\left\{ \left(\vecu{u},v\right)\in\R^{n}\times\R \; | \; v-\vecu{u}\cdot \vecz{z}_0 \in (y_1,y_2]\right\}.
    \]
\end{definition}
\begin{restatethis}{theorem}{mainTheorem}
    Suppose $\mu\in\mathcal{M}\left(\mathcal{S}^n\times\R\right)$ is such that
    \begin{enumerate}
        \item $\mu$ is atomless
        \item $f(\vecx{x})=\int_{\mathcal{S}^n \times \R} \sigma\left(\veca{a}\cdot\vecx{x} -b \right)-\sigma(-b)\dif\mu\left(\veca{a},b\right)$ is a continuous countably piecewise linear function from $\R^{n+1}$ to $\R$.
    \end{enumerate}
    Then, $\mu$ is the zero measure.
\end{restatethis}
\begin{proof}
    The proof is separated into the following parts:
    \begin{enumerate}
        \item \textbf{Definitions and Maps Between Measure Spaces}
        \item \textbf{Refinement of $S$}
        \item \textbf{Vanishing Integrals over Line Segments }
        \item \textbf{Vanishing Integrals over Half-Spaces}
        \item \textbf{Conclusion with Radon-Nikodym}.
    \end{enumerate}
    \textbf{Definitions and Maps Between Measure Spaces}\\
    First, \autoref{onePointDoesntDoIt} proves the theorem for the case $n=0$.
    For induction, assume the theorem holds for $n-1$.
    \\
    Suppose $\mu\in\mathcal{M}\left(\mathcal{S}^n\times\R\right)$ satisfies all the hypotheses.\\
    Let $\mathcal{C}$ be a countable collection of convex polyhedra that cover the domain of $f$ such that $f$ is affine on each.
    Let $\mathcal{F}_1$ be the set of faces of polyhedra in $\mathcal{C}$ with normal vector not orthogonal to $\eLast$.
    \\
    Define the following sets of hyperplanes in $\R^{n+1}$
    \[
        \mathcal{H}_0\coloneqq \left\{ \hyp{a}{b} \; | \; \veca{a}\in\mathcal{S}^n, b\in\R\right\}\quad\text{ and } \quad \mathcal{H}_1\coloneqq \left\{ \hyp{a}{b} \; | \; \veca{a}\in\mathcal{S}^n, \veca{a}\cdot \eLast\not=0, b\in\R\right\}.
    \]
    Define the map $\gamma:\mathcal{S}^n\times\R\rightarrow \mathcal{H}_0$ as $\gamma\left(\veca{a},b\right)=\hyp{a}{b}$.
    By construction of $\mathcal{S}^n$, this is bijective.
    \\
    Define the map $\psi:\mathcal{H}_1\rightarrow\R^{n}\times\R$ such that for $\veca{a}=(a_1,\ldots,a_{n+1})$,
    \begin{equation}
        \psi\left(\hyp{a}{b}\right) =\left( \frac{a_1}{a_{n+1}},\ldots,\frac{a_n}{a_{n+1}}, \frac{b}{a_{n+1}} \right)
        \label{DomainToUV}.
    \end{equation}
    By definition of $\mathcal{H}_1$, it is routine to verify $\psi$ is well-defined and bijective.
    \\
    The image of $\psi$ is $\R^{n}\times\R$, however, elements in the image of $\psi$ should \textit{not} be thought of as being in the domain of $f$.
    Therefore, identify generic elements in the image of $\psi$ with $\left(\vecu{u},v\right)\in\R^{n}\times\R$ and call the space $\Xi\times V$ where $\Xi=\R^n,$ $V=\R$.
    \\
    Define $\varphi:\gamma^{-1}[\mathcal{H}_1]\rightarrow \Xi\times V$ as $\varphi\coloneqq \psi\circ \gamma$.
    Then, $\mu\circ\varphi^{-1}$ is a measure on $\Xi\times V$.
    Since $\varphi$ is bijective, $\mu\circ\varphi^{-1}$ is atomless.
    \\
    For fixed $\vecz{z}_0\in\R^{n}$, $y_0\in\R$,
    \begin{equation}
        \label{PlanesThroughPointsArePlanes}
        \psi\left[ \left\{ \hyp{a}{b} \in \mathcal{H}_1\; | \; \veca{a}\cdot (\vecz{z}_0,y_0) = b\right\} \right]=\left\{\left(\vecu{u},v\right)\in \Xi\times V \; | \; v = y_0 + \vecu{u}\cdot \vecz{z}_0\right\}.
    \end{equation}
    That is the image under $\psi$ of hyperplanes in $\mathcal{H}_1$ which intersect $(\vecz{z}_0,y_0)\in\R^n\times\R$ is a hyperplane in $\Xi\times V$.
    \\
    \textbf{Refinement of $S$}\\
    Let $S\subseteq\R^n$ be the set in \autoref{superUsefulSet}.\\
    Suppose $\mathfrak{h}\in\mathcal{H}_1$.
    Let $\phi_{\mathfrak{h}}:\R^n\rightarrow \R$ be the unique function such that $(\vecz{z},\phi_\mathfrak{h}(\vecz{z}))\in \mathfrak{h}$ for all $\vecz{z}\in \R^n$.
    \\
    Let $P_{\vecz{z},\mathfrak{h}} = \left\{\left(\vecu{u},v\right)\in \Xi\times V\; | \; v = \phi_{\mathfrak{h}}(\vecz{z}) + \vecu{u}\cdot \vecz{z} \right\}$.
    By \autoref{PlanesThroughPointsArePlanes} and because $\psi$ is injective, all hyperplanes in $\psi^{-1}[P_{\vecz{z},\mathfrak{h}}]$ intersect the point $(\vecz{z},\phi_{\mathfrak{h}}(\vecz{z}))$ in the domain.
    \\
    Let $\mathcal{P}_\mathfrak{h} = \{ P_{\vecz{z},\mathfrak{h}}\; | \;  \vecz{z}\in S\}$.\\
    For unique $\vecz{z}_1,\ldots, \vecz{z}_{n+1}\in S$, consider $\bigcap_{i\in [n+1]}
        P_{\vecz{z}_i,\mathfrak{h}}$.
    It follows any hyperplane in $ \psi^{-1}\left[ \bigcap_{i\in [n+1]}P_{\vecz{z}_i,\mathfrak{h}}\right]$ intersects the points $\{(\vecz{z}_1,\phi_\mathfrak{h}(\vecz{z}_1)),\ldots,(\vecz{z}_{n+1},\phi_\mathfrak{h}(\vecz{z}_{n+1}))\}$ where each $\vecz{z}_i\in S$.
    By \autoref{projectingUsefulSets}, these points do not lie on a common $n-1$ dimensional affine subspace, so $\mathfrak{h}$ is the only hyperplane in the domain of $f$ intersecting $\{(\vecz{z}_1,\phi_\mathfrak{h}(\vecz{z}_1)),\ldots,(\vecz{z}_{n+1},\phi_\mathfrak{h}(\vecz{z}_{n+1}))\}$.
    It follows $\bigcap_{i\in [n+1]}P_{\vecz{z}_i,\mathfrak{h}} =\{\psi(\mathfrak{h})\}$.
    Since $\mu\circ\varphi^{-1}$ is atomless, $|\mu\circ\varphi^{-1}|\left(\bigcap_{i\in [n+1]}P_{\vecz{z}_i,\mathfrak{h}} \right)=0$.
    \\
    By \autoref{LimitTheIntersectionLimitTheWeighty},
    there are only countably many $P_{\vecz{z},\mathfrak{h}}\in \mathcal{P}_\mathfrak{h}$ such that $|\mu\circ\varphi^{-1}|\left(P_{\vecz{z},\mathfrak{h}}\right)>0$.\\ Define $S_{\mathfrak{h}}\coloneqq \{\vecz{z}\in S \; | \; |\mu\circ\varphi^{-1}|\left(P_{\vecz{z},\mathfrak{h}}\right) = 0 \}, $ so $S\setminus S_{\mathfrak{h}}$ is countable.\\
    Let $\mathcal{H}_{supp}$ be the set of supporting hyperplanes of polyhedra in $\mathcal{C}$.
    Consider
    \[
        S'\coloneqq \bigcap_{\mathfrak{h}\in \mathcal{H}_1\cap \mathcal{H}_{supp}} S_{\mathfrak{h}}.
    \]
    Since $\mathcal{H}_{supp}$ is countable, $S\setminus S'$ is countable.
    Since $B\cap S$ is uncountable for all open balls $B\subseteq \R^n$, $S'$ is dense in $\R^n$.
    Notice, whenever $\vecz{z}\in S'$ and $(\vecz{z},y)$ is on a hyperplane in $\mathcal{H}_1\cap \mathcal{H}_{supp}$,
    \begin{equation}
        |\mu\circ\varphi^{-1}|\left(\left\{\left(\vecu{u},v\right)\in \Xi\times V \; | \; v = y + \vecu{u}\cdot \vecz{z}\right\} \right)=0.
        \label{zeroAboveSPrimeOnBoundaries}
    \end{equation}
    \textbf{Vanishing Integrals over Line Segments}\\
    Suppose $\vecz{z}_0\in\R^n$, $y_1,y_2\in\R$.
    Suppose $y_1\leq y_2$.
    \\
    By \autoref{PlanesThroughPointsArePlanes}, $L_{\vecz{z}_0}(y_1,y_2)$ is the image under $\psi$ of hyperplanes in $\mathcal{H}_1$ which intersect the line segment between $(\vecz{z}_0,y_1)$ (exclusive) and $(\vecz{z}_0,y_2)$ (inclusive).
    Therefore,
    \[
        \varphi^{-1}\left[L_{\vecz{z}_0}(y_1,y_2)\right]=\left\{\left(\veca{a},b\right)\in\mathcal{S}^n\times\R \; |\; \exists y'\in (y_1,y_2]\;\; \veca{a}\cdot(\vecz{z}_0,y')=b, \;\veca{a}\cdot\eLast\not=0\right\}.
    \]
    If $\varphi\left(\veca{a},b\right)=\left(\vecu{u},v\right)$, then
    \[
        \frac{1}{\sqrt{1+\sum_{i\in [n]}\xi_i^2}}=\frac{1}{\sqrt{1+\sum_{i\in [n]}\frac{a_i^2}{a_{n+1}^2}}}=\frac{a_{n+1}}{\sqrt{\sum_{i\in [n+1]}a_i^2}}=a_{n+1}=\veca{a}\cdot \eLast.
    \]
    Therefore, as $\left(\veca{a},b\right)$ such that $\veca{a}\cdot\eLast=0$ do not contribute to the integral,
    \begin{align*}
         & \int_{L_{\vecz{z}_0}(y_1,y_2)}\frac{1}{\sqrt{1+\| \vecu{u}\|^2}} \dif\mu\circ\varphi^{-1}\left(\vecu{u},v\right) = \int_{\varphi^{-1}\left[L_{\vecz{z}_0}(y_1,y_2)\right]} \veca{a}\cdot \eLast\dif\mu\left(\veca{a},b\right) \\
         & =\int_{\left\{\left(\veca{a},b\right)\in\mathcal{S}^n\times\R \; |\;\exists y' \in (y_1,y_2] \;\; \veca{a}\cdot(\vecz{z}_0,y')= b\right\}} \veca{a}\cdot\eLast\dif\mu\left(\veca{a},b\right).
    \end{align*}
    Recall, for $y\in\R$,
    \[
        D_{\eLast^+} f(\vecz{z}_0,y)=\int_{\left\{\left(\veca{a},b\right)\in\mathcal{S}^n\times\R \; | \; \veca{a}\cdot( \vecz{z}_0,y) \geq b\right\} } \veca{a}\cdot \eLast \dif\mu\left(\veca{a},b\right).
    \]
    By definition of $\mathcal{S}^n$, $y\mapsto \veca{a}\cdot (\vecz{z}_0,y)$ is a non-decreasing, continuous function on $\R$ for any fixed $\veca{a}\in\mathcal{S}^n$.
    Thus, $\veca{a}\cdot (\vecz{z}_0,y_2)\geq \veca{a}\cdot (\vecz{z}_0,y_1)$ for all $\veca{a}\in\mathcal{S}^n$.
    Further, $\veca{a}\cdot (\vecz{z}_0,y_2)\geq b$ and $\veca{a}\cdot (\vecz{z}_0,y_1)<b$ if and only if $\veca{a}\cdot (\vecz{z}_0,y')=b$ for some $y'\in (y_1,y_2]$.
    Therefore,
    \[
        D_{\eLast^+} f(\vecz{z}_0,y_2)-D_{\eLast^+} f(\vecz{z}_0,y_1) = \int_{L_{\vecz{z}_0}(y_1,y_2)} \frac{1}{\sqrt{1+\|\vecu{u}\|^2}} \dif \mu\circ\varphi^{-1}\left(\vecu{u},v\right).
    \]
    It follows whenever $D_{\eLast^+} f(\vecz{z}_0,y_1)=D_{\eLast^+} f(\vecz{z}_0,y_2)$,
    \begin{equation}
        \int_{L_{\vecz{z}_0}(y_1,y_2)} \frac{1}{\sqrt{1+\|\vecu{u}\|^2}} \dif \mu\circ\varphi^{-1}\left(\vecu{u},v\right)=0.
        \label{equalityOnLineImpliesZero}
    \end{equation}
    \textbf{Vanishing Integrals over Half-Spaces}\\
    Consider $\vecz{z}_0\in S'$.
    Consider an interval $(y_0,\infty)\subseteq \R$.
    \\
    For sets $E\subseteq \R^{n+1}$, let $\text{ri}_{\vecz{z}_0}(E)$ denote the relative interior of $E\cap \left( \{\vecz{z}_0\}\times (-\infty,\infty) \right)$ with respect to $\{\vecz{z}_0\}\times (-\infty,\infty)$.
    Then, define
    \[
        J\coloneqq \left\{y\in (y_0,\infty) \; | \; (\vecz{z}_0,y)\in \bigcup_{C\in\mathcal{C}} \text{ri}_{\vecz{z}_0}\left( C\right)\right\}.
    \]
    It follows $J$ is open.
    Then, there are countably many $q_i,r_i\in \R\cup\{\pm\infty\}$ such that $J=\bigcup_{i\in \N} (q_i,r_i)$, the intervals pairwise disjoint.
    \\
    Moreover, $D_{\eLast^+} f(\vecz{z},y)$ is constant on $\text{ri}_{\vecz{z}_0}\left( C   \right)$ for every $C\in\mathcal{C}$.
    As locally constant functions are constant on connected components, $D_{\eLast^+} f(\vecz{z},y)$ is constant on $\{\vecz{z}_0\}\times(q_i,r_i)$ for all $i\in \N$.
    By \autoref{equalityOnLineImpliesZero}, for all $m\in\N$,
    \begin{equation}
        \int_{ L_{\vecz{z}_0}\left(q_{i}+\frac{1}{m},r_{i}-\frac{1}{m}\right)} \frac{1}{\sqrt{1+\left\|\vecu{u}\right\|^2}}\dif\mu\circ\varphi^{-1}\left(\vecu{u},v\right) = 0.
        \label{linesSegmentsZero}
    \end{equation}
    Thus, define $E_m \coloneqq\bigcup_{i\in \N} L_{\vecz{z}_0}\left(q_i+\frac{1}{m}, r_i-\frac{1}{m} \right)$ for $m\in\N$.
    By construction, this is a disjoint union.
    Therefore, by \autoref{linesSegmentsZero}, for all $m\in\N$,
    \[
        \int_{E_m}\frac{1}{\sqrt{1+\|\vecu{u}\|^2}}\dif\mu\circ\varphi^{-1}\left(\vecu{u},v\right) =0.
    \]
    Consider $C\in\mathcal{C}$.
    Suppose $y_1\in\R$ is such that $(\vecz{z}_0,y_1)\in C$ and $(\vecz{z}_0,y_1)$ is not on a face of $C$ in $\mathcal{F}_1$.
    In particular, if $(\vecz{z}_0,y_1)$ is not on the interior of $C$, it lies only on a face of $C$ with normal vector orthogonal to $\eLast$.
    As $C$ has only finitely many faces, it follows there is $\delta>0$ such that $(\vecz{z}_0,y_1+\epsilon)\in C$ whenever $|\epsilon|<\delta$.
    Therefore, $(\vecz{z}_0,y_1)\in \text{ri}_{\vecz{z}_0}(C)$.
    \\
    Thus, $(y_0,\infty)\setminus J \subseteq \left\{y\in \R  \; | \; (\vecz{z}_0,y)\in \bigcup_{F\in \mathcal{F}_1}
        F \right\}.
    $ Further, for every $F\in \mathcal{F}_1$, $|F\cap (\{\vecz{z}_0\}\times(y_0,\infty))|\leq 1$.
    Therefore, as $\mathcal{F}_1$ is countable, $(y_0,\infty)\setminus J$ is countable.
    \\
    Suppose $b_0\in(y_0,\infty)\setminus J$.
    Then, $(\vecz{z}_0,b_0)\in\bigcup_{F\in \mathcal{F}_1} F\subseteq\bigcup_{\mathfrak{h}\in \mathcal{H}_1\cap \mathcal{H}_{supp}} \mathfrak{h}$.
    As $\vecz{z}_0\in S'$, by \autoref{zeroAboveSPrimeOnBoundaries},
    \begin{equation}
        \left\lvert\mu\circ\varphi^{-1}\right\rvert\left({\overbar{L}_{\vecz{z}_0}(b_0)}\right)=\left\lvert\mu\circ\varphi^{-1}\right\rvert\left(\left\{\left(\vecu{u},v\right)\in \Xi\times V\; | \; v=b_0+\vecu{u}\cdot \vecz{z}_0\right\}\right)=0.
        \label{pointsMeasureZero}
    \end{equation}
    Thus, as $(y_0,\infty)\setminus J$ is countable, $\left\lvert\mu\circ\varphi^{-1}\right\rvert\left(\bigcup_{b\in (y_0,\infty)\setminus J}{\overbar{L}_{\vecz{z}_0}(b)} \right)=0$.
    It follows for all $m\in\N$,
    \[
        \int_{{E_m}\cup\bigcup_{b\in(y_0,\infty)\setminus J}{\overbar{L}_{\vecz{z}_0}(b)}} \frac{1}{\sqrt{1+\|\vecu{u}\|^2}}\dif\mu\circ\varphi^{-1}\left(\vecu{u},v\right) =0.
    \]
    Further, $E_m\cup \bigcup_{b\in (y_0,\infty)\setminus J}{\overbar{L}_{\vecz{z}_0}(b)}\rightarrow\left\{\left(\vecu{u},v\right)\in \Xi\times V \; | \;v>y_0+\vecu{u}\cdot \vecz{z}_0\right\}$ as $m\rightarrow\infty$.
    By Dominated Convergence Theorem, as $\mu\circ\varphi^{-1}$ is finite,
    \[
        \int_{v>y_0+\vecu{u}\cdot \vecz{z}_0} \frac{1}{\sqrt{1+\|\vecu{u}\|^2}}\dif\mu\circ\varphi^{-1}\left(\vecu{u},v\right) =0.
    \]
    Similarly,
    \[
        \int_{v<y_0+\vecu{u}\cdot \vecz{z}_0} \frac{1}{\sqrt{1+\|\vecu{u}\|^2}} \dif\mu\circ\varphi^{-1}\left(\vecu{u},v\right)= 0.
    \]
    The equation $v= y_0 +\vecu{u}\cdot \vecz{z}_0$ is equivalent to $(\vecz{z}_0,-1)\cdot(\vecu{u},v)=-y_0$.
    Therefore, for all $\vecz{z}_0\in S'$ and $y_0\in \R$, when considering an open half-space $H$ in $\Xi\times V$ with a boundary defined by $(\vecz{z}_0,-1)\cdot(\vecu{u},v)=-y_0$, $\int_H \frac{1}{\sqrt{1+\|\vecu{u}\|^2}}\dif\mu\circ\varphi^{-1}\left(\vecu{u},v\right)=0$.
    \\
    \textbf{Conclusion with Radon-Nikodym}\\
    On the Borel sets of $\Xi\times V$, define the measure $\nu(E)=\int_{E} \frac{1}{\sqrt{1+\|\vecu{u}\|^2}} \dif\mu\circ\varphi^{-1}\left(\vecu{u},v\right)$.
    Given a Hahn decomposition of $\mu\circ\varphi^{-1}$ with positive set $P$ and negative set $N$, $\nu(E)=\int_{E} (\chi_P-\chi_N) \frac{1}{\sqrt{1+\|\vecu{u}\|^2}} \dif|\mu\circ\varphi^{-1}|\left(\vecu{u},v\right)$.
    \\
    By the previous part, if $H$ is an open half-space with normal vector $(\vecz{z}_0,-1)$ with $\vecz{z}_0\in S'$, then $\nu(H)=0$.\\
    Since $S'$ is dense in $\R^n$, $\nu(H)=0$ whenever the boundary of $H$ is in a dense set of directions.
    By a careful inspection of the proof of the Cramer-Wold theorem, it follows the characteristic function of $\nu$, $c_\nu(\vecx{t})\coloneqq \int_{\R^n} e^{i\vecx{t}\cdot\vecx{x}}\dif \nu(\vecx{x})$, is zero on a dense set of $\R^n$ \cite{cramerTheoremsDistributionFunctions1936a}*{Equation 4}.
    By Dominated Convergence Theorem, in fact $c_\nu\equiv 0$.
    Since characteristic functions are unique, $\nu$ is the zero measure \cite{klenkeCharacteristicFunctionsCentral2020}*{Theorem 15.9}.
    \\
    However, the Radon-Nikodym derivative of a measure is unique up to almost everywhere.
    As 0 is a Radon-Nikodym derivative for the zero measure and $(\chi_P-\chi_N)\frac{1}{\sqrt{1+\|\vecu{u}\|^2}}$ is never 0, it follows $|\mu\circ\varphi^{-1}|\left(\Xi\times V\right)=0$.
    \\
    Since $\varphi$ is bijective between $\gamma^{-1}[\mathcal{H}_1]$ and $\Xi\times V$, the support of $\mu$ is contained in
    \[
        \left(\mathcal{S}^n\times\R\right) \setminus \gamma^{-1}[\mathcal{H}_1]=\{ \veca{a}\in\mathcal{S}^n \; | \;  \veca{a}\cdot \eLast=0\}\times\R.
    \]
    By definition of $\mathcal{S}^n$, the support of $\mu$ is contained in a copy of $\mathcal{S}^{n-1}\times \R$ embedded into $\mathbb{S}^{n}\times \R$.
    That is, $f(\vecx{x})=\int_{\mathcal{S}^{n-1}\times\R} \sigma(\veca{a}\cdot\vecx{x}-b)-\sigma(-b)\dif\mu(\veca{a},b)$.
    Moreover, $g:\R^n\rightarrow \R$ defined as
    \begin{align*}
        g(\vecz{z}) & \coloneqq \int_{\mathcal{S}^{n-1}\times \R} \sigma\left(\bm{\alpha}\cdot \vecz{z}-b\right)-\sigma(-b)\dif\mu_{\mathcal{S}^{n-1}\times \R}\left(\bm{\alpha},b\right)               \\
                    & =\int_{\mathcal{S}^{n-1}\times \R} \sigma\left((\bm{\alpha},0)\cdot (\vecz{z},0)-b \right)-\sigma(-b)\dif\mu_{\mathcal{S}^{n-1}\times \R}\left(\bm{\alpha},b\right)=f(\vecz{z},0)
    \end{align*}
    is countably piecewise linear.
    By the inductive hypothesis, $\mu$ is the zero measure.
    \\
\end{proof}
To finish the proof of the main result [\ref{representableByDeltaPointMassesN}], it is necessary to split the measure into fully atomic and atomless parts and consider them separately.
By first establishing point masses always induce non-affineness even when dense, we can deduce the fully atomic and atomless components of the measure must both give rise to countably piecewise linear functions.
\begin{lemma}
    Let $\mu\in\mathcal{M}(\mathcal{S}^n\times \R)$ and $f(\vecx{x})\coloneqq \int_{\mathcal{S}^n\times\R} \sigma\left(\veca{a}\cdot\vecx{x} -b \right)-\sigma(-b)\dif\mu\left(\veca{a},b\right)$.
    Suppose $\mu\left(\left\{\left(\bm{c},d\right)\right\}\right)\not=0$.
    Then, $f(\vecx{x})$ is not affine on every open ball in the domain of $f$ intersecting $\hyp{c}{d}$.
    \label{PointMassesAreBoundaries}
\end{lemma}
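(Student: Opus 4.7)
The plan is to decompose $\mu = \alpha\delta_{(\bm{c},d)}+\mu'$ with $\alpha \coloneqq \mu(\{(\bm{c},d)\})\neq 0$ and $\mu'(\{(\bm{c},d)\})=0$, and to contradict affineness by producing a point $\vecx{x}_0\in \hyp{c}{d}\cap B$ where a symmetric second difference of $f$ along the direction $\bm{c}$ is forced to be approximately $\alpha t$ for small $t>0$.

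The key tool is the elementary identity
\[
\sigma(s+r)+\sigma(s-r)-2\sigma(s)=\max\{0,\,|r|-|s|\},
\]
which, substituted into the integral representation of $f$ with $s=\veca{a}\cdot\vecx{x}_0-b$ and $r=t\,\veca{a}\cdot\bm{c}$, yields
\[
f(\vecx{x}_0+t\bm{c})+f(\vecx{x}_0-t\bm{c})-2f(\vecx{x}_0)=\int_{\mathcal{S}^n\times\R}\max\!\bigl(0,\,t|\veca{a}\cdot\bm{c}|-|\veca{a}\cdot\vecx{x}_0-b|\bigr)\dif\mu(\veca{a},b),
\]
the $\sigma(-b)$ constants canceling. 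For $\vecx{x}_0\in\hyp{c}{d}$ the atom $(\bm{c},d)$ contributes exactly $\alpha t$ (using $\bm{c}\cdot\vecx{x}_0=d$ and $\|\bm{c}\|=1$), while the $\mu'$-contribution is bounded in absolute value by $t\,|\mu'|(T_t)$, where $T_t\coloneqq\{(\veca{a},b):|\veca{a}\cdot\vecx{x}_0-b|<t|\veca{a}\cdot\bm{c}|\}$. Since $T_t$ decreases as $t\downarrow 0$ to a subset of $A(\vecx{x}_0)\coloneqq\{(\veca{a},b):\veca{a}\cdot\vecx{x}_0=b\}$, continuity of the finite measure $|\mu'|$ gives $\lim_{t\to 0^+}|\mu'|(T_t)\leq |\mu'|(A(\vecx{x}_0))$.

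The main step is then to choose $\vecx{x}_0\in \hyp{c}{d}\cap B$ with $|\mu'|(A(\vecx{x}_0))=0$. Writing $\mathcal{H}^n$ for the $n$-dimensional Hausdorff measure on $\hyp{c}{d}$, Fubini gives
\[
\int_{\hyp{c}{d}\cap B} |\mu'|(A(\vecx{x}))\dif\mathcal{H}^n(\vecx{x})=\int_{\mathcal{S}^n\times\R} \mathcal{H}^n\!\bigl(\hyp{c}{d}\cap B\cap \hyp{a}{b}\bigr)\dif|\mu'|(\veca{a},b).
\]
For $(\veca{a},b)\neq(\bm{c},d)$ one has $\hyp{a}{b}\neq\hyp{c}{d}$, since $\mathcal{S}^n$ contains exactly one representative of each hyperplane in $\R^{n+1}$; hence $\hyp{c}{d}\cap\hyp{a}{b}$ has dimension at most $n-1$ and is $\mathcal{H}^n$-null. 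Since the atom is also $|\mu'|$-null, the displayed integral equals zero and $|\mu'|(A(\vecx{x}_0))=0$ for $\mathcal{H}^n$-a.e.\ $\vecx{x}_0\in\hyp{c}{d}\cap B$. Any such $\vecx{x}_0$ (with $t$ small enough that $\vecx{x}_0\pm t\bm{c}\in B$) gives a second difference of $\alpha t + o(t)\neq 0$, contradicting affineness of $f$ on $B$. The main obstacle is this Fubini step; the crucial ingredient is the unique parametrization of hyperplanes by $\mathcal{S}^n\times\R$, which isolates the atom at $(\bm{c},d)$ as the only possible contributor to the crease along $\hyp{c}{d}$.
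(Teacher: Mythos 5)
Your proof is correct, and it takes a genuinely different route from the paper. The paper proves this lemma by reusing the machinery of its main theorem: it rotates coordinates so that $\bm{c}\cdot\eLast\neq 0$, passes to the parameter space $\Xi\times V$ via $\varphi$, invokes the algebraically independent set $S$ of Proposition \ref{superUsefulSet} together with the uncountable-intersection lemma (Lemma \ref{LimitTheIntersectionLimitTheWeighty}) to locate a point $(\vecz{z}_0,\phi(\vecz{z}_0))$ on $\hyp{c}{d}\cap B_0$ through which the remaining hyperplanes carry zero $|\mu\circ\varphi^{-1}|$-mass, and then extracts the atom from a limit of one-sided directional-derivative differences along line segments. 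You instead work entirely in the original parameter space: the identity $\sigma(s+r)+\sigma(s-r)-2\sigma(s)=\max\{0,|r|-|s|\}$ turns the symmetric second difference of $f$ along $\bm{c}$ into a nonnegative-kernel integral in which the atom contributes exactly $\alpha t$, and a Tonelli computation against $\mathcal{H}^n$ on $\hyp{c}{d}\cap B$ (using that distinct points of $\mathcal{S}^n\times\R$ parametrize distinct hyperplanes, so the cross-sections $\hyp{c}{d}\cap\hyp{a}{b}$ are $\mathcal{H}^n$-null) produces an a.e.\ point $\vecx{x}_0$ where the $\mu'$-remainder is $o(t)$. Your argument is more elementary and self-contained for this particular lemma --- it needs neither the coordinate rotation, nor the transcendence-basis construction of $S$, nor the Halmos intersection lemma, and it quantifies the failure of affineness explicitly --- whereas the paper's version buys uniformity with the setup already built for Theorem \ref{mainTheorem}. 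The only points worth making explicit in a polished write-up are that Tonelli applies because the incidence set $\{\veca{a}\cdot\vecx{x}=b\}$ is jointly Borel and both measures involved are finite, and that $\mathcal{H}^n(\hyp{c}{d}\cap B)>0$ so that an almost-everywhere statement yields an actual point; neither is a gap.
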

\begin{proof}
    Rotate the coordinate system of $f$ such that $\bm{c}\cdot \eLast\not=0$.
    Define $\Xi\times V$, $\psi,$ and $\varphi$ as in \autoref{mainTheorem}.
    Let $S\subseteq\R^n$ be the set in \autoref{superUsefulSet}.
    \\
    There exists a unique function $\phi:\R^n\rightarrow \R$ such that $(\vecz{z},\phi(\vecz{z}))\in \hyp{c}{d}$ for all $\vecz{z}\in \R^n$.\\
    By way of contradiction, suppose $f$ is affine on an open ball $B_0$ intersecting $\hyp{c}{d}$.
    Then, there is an uncountable set $S'\subseteq S$ such that $\{\left(\vecz{z},\phi(\vecz{z})\right)\; | \; \vecz{z}\in S'\}\subseteq B_0$.
    \\
    For $\vecz{z}\in S'$, let $P_{\vecz{z}}= \left\{\left(\vecu{u},v\right)\in \Xi\times V\; | \; v = \phi(\vecz{z}) + \vecu{u}\cdot \vecz{z}\right\}$.
    Now, let $\mathcal{P}'_{\bm{c},d} = \{ P_{\vecz{z}}\setminus\left\{\psi(\hyp{c}{d})\right\} \; | \; \vecz{z}\in S'\}$.
    \\
    For unique $\vecz{z}_1,\ldots, \vecz{z}_{n+1}\in S'$, consider $\bigcap_{i\in [n+1]}
        P_{\vecz{z}_i}\setminus\left\{\psi(\hyp{c}{d})\right\}$.
    \\
    It follows any hyperplane in $ \psi^{-1}\left[ \bigcap_{i\in [n+1]}P_{\vecz{z}_i}\setminus\left\{\psi(\hyp{c}{d})\right\}\right]$ intersects the points $\{(\vecz{z}_1,\phi(\vecz{z}_1)),\ldots,(\vecz{z}_{n+1},\phi(\vecz{z}_{n+1}))\}$ in the domain where each $\vecz{z}_i\in S$.
    By \autoref{projectingUsefulSets}, these points do not lie on a common $n-1$ dimensional affine subspace, so $\hyp{c}{d}$ is the only hyperplane intersecting $\{(\vecz{z}_1,\phi(\vecz{z}_1)),\ldots,(\vecz{z}_{n+1},\phi(\vecz{z}_{n+1}))\}$.
    It follows $\bigcap_{i\in [n+1]}P_{\vecz{z}_i}\setminus\left\{\psi(\hyp{c}{d})\right\}=\emptyset$ and $|\mu\circ\varphi^{-1}|\left( \bigcap_{i\in [n+1]}P_{\vecz{z}_i}\setminus\left\{\psi(\hyp{c}{d})\right\} \right)=0$.
    \\
    Therefore, by \autoref{LimitTheIntersectionLimitTheWeighty},
    there are only countably many $P'_{\vecz{z}}\in \mathcal{P}'_{\bm{c},d}$ such that $|\mu\circ\varphi^{-1}|\left(P'_{\vecz{z}}\right)>0$.\\
    Since $S'$ is uncountable, there is $\vecz{z}_0\in S'$ such that $|\mu\circ\varphi^{-1}|\left(P_{\vecz{z}_0}\setminus\left\{\psi(\hyp{c}{d})\right\} \right)=0$.
    \\
    As $(\vecz{z}_0,\phi(\vecz{z}_0))\in B_0$, there is $\epsilon>0$ such that for all $\delta<\epsilon$, $f$ is affine on the line segment connecting $(\vecz{z}_0,\phi(\vecz{z}_0)-\delta)$ and $(\vecz{z}_0,\phi(\vecz{z}_0)+\delta)$.\\
    By \autoref{equalityOnLineImpliesZero} in \autoref{mainTheorem}, it follows $\int_{L_{\vecz{z}_0}(\phi(\vecz{z}_0)-\delta,\phi(\vecz{z}_0)+\delta)} \frac{1}{\sqrt{1+\|\vecu{u}\|^2}}\dif\mu\circ\varphi^{-1}\left(\vecu{u},v\right)=0$.
    Since this holds for all $\delta\in(0,\epsilon)$, by the Dominated Convergence Theorem,
    \begin{equation}
        \int_{{\overbar{L}_{\vecz{z}_0}(\phi(\vecz{z}_0))}} \frac{1}{\sqrt{1+\|\vecu{u}\|^2}}\dif\mu\circ\varphi^{-1}\left(\vecu{u},v\right)=0.
        \label{pointDerivIs0}
    \end{equation}
    Recall, $\overbar{L}_{\vecz{z}_0}(\phi(\vecz{z}_0))=\left\{\left(\vecu{u},v\right)\in \Xi\times V\; | \; v=\phi(\vecz{z}_0)+\vecu{u}\cdot \vecz{z}_0\right\}$.
    By \autoref{pointDerivIs0},
    \begin{align*}
        0 & =\int_{{\overbar{L}_{\vecz{z}_0}(\phi(\vecz{z}_0))}} \frac{1}{\sqrt{1+\|\vecu{u}\|^2}}\dif\mu\circ\varphi^{-1}\left(\vecu{u},v\right)                                                                                                                       \\
          & =\int_{P_{\vecz{z}_0}\setminus\left\{\psi(\hyp{c}{d})\right\}} \frac{1}{\sqrt{1+\|\vecu{u}\|^2}}\dif\mu\circ\varphi^{-1}\left(\vecu{u},v\right)+\int_{\{\psi(\hyp{c}{d})\}}\frac{1}{\sqrt{1+\|\vecu{u}\|^2}}\dif\mu\circ\varphi^{-1}\left(\vecu{u},v\right) \\
          & =0+\int_{\left\{ \left( \bm{c},d \right) \right\}} \veca{a}\cdot \eLast \dif\mu\left(\veca{a},b\right)=\mu\left(\left\{\left(\bm{c},d\right)\right\}\right)\cdot \left(\bm{c}\cdot\eLast\right).
    \end{align*}
    This is a contradiction.
\end{proof}
\begin{restatethis}{corollary}{countableRepresentablePointMasses}
    Let $f:\R^{n+1}\rightarrow\R$ be a continuous countably piecewise linear function.
    Suppose there is a countable collection $\mathcal{C}$ of convex polyhedra covering $\R^{n+1}$ such that $f$ is affine on each polyhedron and each polyhedron has non-empty interior.
    Suppose there exist \restateAlt{measuresMainResult}{$\mu\in\mathcal{M}(\mathbb{S}^n\times\R)$}{a finite, signed Borel measure $\mu$ on $\mathbb{S}^n\times\R$} and $c_0\in\R$ such that $f(\vecx{x})=\int_{\mathbb{S}^n\times\R}\sigma\left(\veca{a}\cdot\vecx{x}-b \right)-\sigma(-b)\dif\mu\left(\veca{a},b\right)+c_0$.\\
    Then, there are $r_0,r_{(\vecx{c},d)}\in\R$ and a countable set $M\subseteq \mathcal{S}^n\times\R$ such that $f(\vecx{x})=r_0+\sum_{(\veca{c},d)\in M}r_{(\veca{c},d)}\sigma(\veca{c}\cdot\vecx{x}-d)$.
  \end{restatethis}
  \begin{proof}
    By \autoref{packingDownIntoHalfsphere}, we can assume $f(\vecx{x})=\int_{\mathcal{S}^n\times \R} \sigma\left(\veca{a}\cdot\vecx{x}-b \right)-\sigma(-b)\dif\mu\left(\veca{a},b\right)+\veca{{a}}_0\cdot \vecx{x}+{b}_0$ for some $\mu\in\mathcal{M}(\mathcal{S}^n\times\R)$, $\veca{{a}}_0\in\R^{n+1}$, and ${b}_0\in\R$.
    Decompose $\mu$ such that $\mu=\mu_{C}+\sum_{(\veca{c},d)\in M}r_{(\veca{c},d)}\delta_{(\veca{c},d)}$ where $\mu_C$ is atomless, $M$ is a countable subset of $\mathcal{S}^n\times\R$, and $r_{(\veca{c},d)}\in\R\setminus\{0\}$ for all $(\veca{c},d)$.
    \\
    Let $g:\R^{n+1}\rightarrow \R$ be
    \begin{align*}
      g(\vecx{x}) & \coloneqq \int_{\mathcal{S}^n\times\R} \sigma\left(\veca{a}\cdot\vecx{x} -b \right)-\sigma(-b)\dif\left(\sum_{(\veca{c},d)\in M}r_{(\veca{c},d)}\delta_{(\veca{c},d)}\right)\left(\veca{a},b\right)+\veca{{a}}_0\cdot \vecx{x} +{b}_0 \\
                  & =\sum_{(\veca{c},d)\in M} r_{(\veca{c},d)}\left(\sigma\left(\veca{c}\cdot\vecx{x} -d \right)-\sigma(-d)\right) +\veca{{a}}_0\cdot \vecx{x} +{b}_0.
    \end{align*}
    Then, $g$ is certainly affine outside of $\bigcup_{(\veca{c},d)\in M}\hyp{c}{d}$.
    By \autoref{PointMassesAreBoundaries}, for every $C\in\mathcal{C}$ and every $(\veca{c},d)\in M$, $\left(\text{int } C\right)\cap \hyp{c}{d}=\varnothing$.
    Therefore, for every $C\in\mathcal{C}$, $g$ is affine on $C$, because $g$ is continuous and $\overline{\text{int } C}=C$.
    Thus, the cover $\mathcal{C}$ shows $g$ is countably piecewise linear.
    \\
    It follows $\int_{\mathcal{S}^n\times\R} \sigma\left(\veca{a}\cdot\vecx{x} -b \right)-\sigma(-b)\dif\mu_C(\veca{a},b)$ is also countably piecewise linear.
    By \autoref{mainTheorem}, $\mu_C$ is in fact the zero measure.\\
    Then, $f(\vecx{x})=g(\vecx{x})$.
    Note, $\veca{{a}}_0\cdot \vecx{x}=\sigma(\veca{{a}}_0\cdot\vecx{x})-\sigma(-\veca{{a}}_0\cdot\vecx{x})$.
    Thus,
    \[
      f(\vecx{x}) =\left(b_0-\sum_{(\bm{c},d)\in M}r_{(\bm{c},d)}\sigma(-d)\right) + \sigma(\veca{{a}}_0\cdot\vecx{x})-\sigma(-\veca{{a}}_0\cdot\vecx{x})+\sum_{(\bm{c},d)\in M} r_{(\bm{c},d)}\sigma(\bm{c}\cdot\vecx{x}-d).
    \]
  \end{proof}
  \begin{restatethis}{corollary}{representableByDeltaPointMassesN}
    Let $f:\R^{n+1}\rightarrow\R$ be a continuous finitely piecewise linear function.
    If there exist \restateAlt{measuresMainResult}{$\mu\in\mathcal{M}(\mathbb{S}^n\times\R)$}{a finite, signed Borel measure $\mu$ on $\mathbb{S}^n\times\R$} and $c_0\in\R$ such that $f(\vecx{x})=\int_{\mathbb{S}^n\times\R}\sigma\left(\veca{a}\cdot\vecx{x}-b \right)-\sigma(-b)\dif\mu\left(\veca{a},b\right)+c_0$, then $f$ is representable as a finite-width network as in \autoref{finiteNetwork}.
  \end{restatethis}
  \begin{proof}
    Since $f$ is \textit{finitely} piecewise linear, by \cite{gorokhovikGeometricalAnalyticalCharacteristic2011}, there exists a finite collection $\mathcal{C}$ of convex polyhedra with non-empty interior covering $\R^{n+1}$ such that $f$ is affine on each.
      By \autoref{countableRepresentablePointMasses}, there are $r_0\in\R$, $r_{(\vecx{c},d)}\in\R\setminus\{0\}$, and a countable set $M\subseteq \mathcal{S}^n\times\R$, such that $f(\vecx{x})=r_0+\sum_{(\veca{c},d)\in M}r_{(\veca{c},d)}\sigma(\veca{c}\cdot\vecx{x}-d)$.
      As $f$ will have a boundary at $\hyp{c}{d}$ for all $(\veca{c},d)\in M$, $M$ is a finite set.
  \end{proof}
\begin{corollary}
    Let $f(\vecx{x})=\int_{\mathbb{S}^n\times\R}\sigma\left(\veca{a}\cdot\vecx{x}-b \right)-\sigma(-b)\dif\mu\left(\veca{a},b\right)+c_0$ with $\mu\in\mathcal{M}(\mathbb{S}^n\times\R)$ and $c_0\in\R$.
    If $f\not\equiv 0$, then $f$ is not a compactly supported finitely piecewise linear function.
    \label{compactSupportImpossible}
\end{corollary}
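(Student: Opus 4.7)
By \autoref{representableByDeltaPointMassesN}, I may write
\[
f(\vecx{x}) = r_0 + \sum_{i=1}^{k} r_i\, \sigma(\veca{c}_i \cdot \vecx{x} - d_i)
\]
with $k$ finite, each $r_i\neq 0$, and the atoms $(\veca{c}_i,d_i)\in\mathcal{S}^n\times\R$ distinct, so the target becomes showing that no such nonzero network can be compactly supported. Assume toward contradiction that $f\not\equiv 0$ has compact support.

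I would first read off constraints from the behavior of $f$ along rays. For a generic direction $\vecx{d}\in\mathbb{S}^n$ with $\veca{c}_i\cdot\vecx{d}\neq 0$ for every $i$, set $I^+(\vecx{d})=\{i:\veca{c}_i\cdot\vecx{d}>0\}$; for large $t$,
\[
f(t\vecx{d}) \;=\; t\sum_{i\in I^+(\vecx{d})} r_i\,(\veca{c}_i\cdot\vecx{d}) \;+\; \Bigl(r_0 - \sum_{i\in I^+(\vecx{d})} r_i d_i\Bigr).
\]
Compact support forces both the $t$-coefficient and the constant to vanish for every generic $\vecx{d}$, so $\sum_{i\in I^+(\vecx{d})} r_i d_i = r_0$ identically in $\vecx{d}$.

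I would then trace this identity across a wall $\veca{c}_j^\perp\cap\mathbb{S}^n$: at a generic point of such a wall only the index $j$ switches in or out of $I^+(\vecx{d})$, so the sum jumps by $\pm r_j d_j$; constancy equal to $r_0$ forces $r_j d_j=0$, hence $d_j=0$ for every $j$. With all biases zero, positive homogeneity of $\sigma$ yields $f(t\vecx{x})-r_0 = t\bigl(f(\vecx{x})-r_0\bigr)$ for $t>0$, and sending $t\to\infty$ with $f(t\vecx{x})\to 0$ forces $f\equiv r_0$ and then $r_0=0$, contradicting $f\not\equiv 0$.

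The main obstacle is justifying the wall-crossing step: for each $j$ I need a point of $\veca{c}_j^\perp\cap\mathbb{S}^n$ at which $\veca{c}_i\cdot\vecx{d}\neq 0$ for every $i\neq j$. The wall is an $(n-1)$-sphere and the bad loci $\veca{c}_i^\perp\cap\veca{c}_j^\perp$ have strictly smaller dimension, so such generic points exist as soon as $n\geq 1$. This hypothesis is genuinely necessary: in the case $n=0$ the triangle $\sigma(x+1)-2\sigma(x)+\sigma(x-1)$ is compactly supported, nonzero, and representable in the form appearing in the hypotheses, so the statement is implicitly for $n\geq 1$.
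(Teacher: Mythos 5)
Your opening move is the same as the paper's: invoke \autoref{representableByDeltaPointMassesN} to reduce to a finite network $f(\vecx{x})=r_0+\sum_{i=1}^{k}r_i\sigma(\veca{c}_i\cdot\vecx{x}-d_i)$ with distinct atoms and nonzero weights. From there the paper finishes in two lines: if some atom survives, \autoref{PointMassesAreBoundaries} says $f$ fails to be affine on every open ball meeting the hyperplane $\veca{c}_j\cdot\vecx{x}=d_j$, and since that hyperplane is unbounded (the domain has dimension $n+1\geq 2$) it meets balls outside any compact set, where a compactly supported $f$ would have to be affine. Your ray-asymptotics route is more elementary and self-contained, and your remark that the one-dimensional case is a genuine exception is a good catch --- the paper's own proof relies on exactly that dimension restriction.

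However, the wall-crossing step has a real gap. The atoms are distinct as points of $\mathcal{S}^n\times\R$, but nothing prevents two of them from sharing a direction, $\veca{c}_i=\veca{c}_j$ with $d_i\neq d_j$. Then the ``bad locus'' $\veca{c}_i^{\perp}\cap\veca{c}_j^{\perp}$ is all of $\veca{c}_j^{\perp}$, not a lower-dimensional subset, and crossing the wall switches every index in the group $G_j=\{i \; | \; \veca{c}_i=\veca{c}_j\}$ at once; the jump only yields $\sum_{i\in G_j}r_id_i=0$, not $d_j=0$. The implication you want is in fact false as a statement about the generic-ray conditions alone: in $\R^2$ the tent $f(x,y)=\sigma(x-1)-2\sigma(x)+\sigma(x+1)$ satisfies everything you derive (vanishing $t$-coefficient and vanishing constant on every generic ray, zero jump across the single wall $x=0$) yet has nonzero biases; compact support is violated only along the non-generic direction $(0,\pm1)$, where $f(0,t)=1$ for all $t$. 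To close the argument you must also use rays lying inside the walls --- for $\vecx{d}$ with $\veca{c}_i\cdot\vecx{d}=0$ the $i$-th term contributes the constant $\sigma(-d_i)$ to $f(t\vecx{d})$, which supplies the missing equations for each group $G_j$ --- or simply observe, as the paper does, that non-affineness along a single unbounded crease already contradicts compact support.
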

\begin{proof}
    Suppose $f\not\equiv 0$.
    By \autoref{representableByDeltaPointMassesN}, $f$ is representable with a measure of the form $\sum_{(\veca{c},d)\in M}r_{(\veca{c},d)}\delta_{(\veca{c},d)}$ such that $M$ is finite.
    If $M$ is empty, $f(\vecx{x})=c_0\not=0$.
    Otherwise, $f$ will not be affine along $\hyp{c}{d}$ for some $(\veca{c},d)\in M$.
    Since $n\geq 2$, $\hyp{c}{d}$ will extend infinitely and $f$ cannot be compactly supported.
\end{proof}
\section*{Acknowledgements}
The author would like to acknowledge the support of the National Science Foundation (No.
1934884).
\bibliography{RidgeletTransformsTex}
\end{document}